\newtheorem{theorem}{Theorem}
\newtheorem{corollary}{Corollary}[theorem]
\newtheorem{lemma}[theorem]{Lemma}
\definecolor{myblue}{HTML}{b2f0ff}
\definecolor{myblue2}{HTML}{cef5ff}
\definecolor{myblue3}{HTML}{e7faff}
\def\eqref#1{equation~\ref{#1}}
\def\1{\bm{1}}
\def\rvu{{\mathbf{i}}}
\def\rvu{{\mathbf{u}}}
\def\rvw{{\mathbf{w}}}
\def\rvx{{\mathbf{x}}}
\def\rvy{{\mathbf{y}}}
\def\rvz{{\mathbf{z}}}
\def\mJ{{\bm{J}}}
\DeclareMathAlphabet{\mathsfit}{\encodingdefault}{\sfdefault}{m}{sl}
\SetMathAlphabet{\mathsfit}{bold}{\encodingdefault}{\sfdefault}{bx}{n}
\title{Subspace Identification for Multi-Source Domain Adaptation}
\author{%
Zijian Li$^{2,3}$, Ruichu Cai$^{2}$\thanks{Corresponding authors.}, Guangyi Chen$^{3,1}$, Boyang Sun$^{3}$, Zhifeng Hao$^{4}$, Kun Zhang$^{3,1*}$~\\$^1$ Carnegie Mellon University \\ $^2$ School of Computer Science, Guangdong University of Technology \\$^3$ Mohamed bin Zayed University of Artificial Intelligence\\$^4$ Shantou University
}
\begin{document}

\maketitle

\begin{abstract}

Multi-source domain adaptation (MSDA) methods aim to transfer knowledge from multiple labeled source domains to an unlabeled target domain. 
Although current methods achieve target joint distribution identifiability by enforcing minimal changes across domains, they often necessitate stringent conditions, such as an adequate number of domains, monotonic transformation of latent variables, and invariant label distributions.
These requirements are challenging to satisfy in real-world applications. To mitigate the need for these strict assumptions, we propose a subspace identification theory that guarantees the disentanglement of domain-invariant and domain-specific variables under less restrictive constraints regarding domain numbers and transformation properties, thereby facilitating domain adaptation by minimizing the impact of domain shifts on invariant variables. Based on this theory, we develop a Subspace Identification Guarantee (SIG) model that leverages variational inference. Furthermore, the SIG model incorporates class-aware conditional alignment to accommodate target shifts where label distributions change with the domains. Experimental results demonstrate that our SIG model outperforms existing MSDA techniques on various benchmark datasets, highlighting its effectiveness in real-world applications. \footnote{https://github.com/jozerozero/Subspace\_Identification}

\end{abstract}

\section{Introduction}


Multi-Source Domain Adaptation (MSDA) is a method of transferring knowledge from multiple labeled source domains to an unlabeled target domain, to address the challenge of domain shift between the training data and the test environment. Mathematically, in the context of MSDA, we assume the existence of $M$ source domains $\{\mathcal{S}_1,\mathcal{S}_2, ..., \mathcal{S}_M\}$ and a single target domain $\mathcal{T}$. 
For each source domain $\mathcal{S}_{i}$, data are drawn from a distinct distribution, represented as $p_{\rvx,\rvy|\rvu_{\mathcal{S}_{i}}}$, where the variables $\rvx, \rvy, \rvu$ correspond to features, labels, and domain indices, respectively.  In a similar manner, the distribution within the target domain $\mathcal{T}$ is given by $p_{\rvx, \rvy| \rvu_{\mathcal{T}}}$. In the source domains, we have access to $m_i$ annotated feature-label pairs of each domain, denoted by $(\rvx^{\mathcal{S}_i},\rvy^{\mathcal{S}_i})=(\rvx^{\mathcal{S}_i}_k,y^{\mathcal{S}i}_k)^{m_i}_{k=1}$, while in the target domain, only $m_{\mathcal{T}}$ unannotated features are observed, represented as  $(\rvx^{(\mathcal{T})})=(\rvx^{\mathcal{T}}_k)^{m_{\mathcal{T}}}_{k=1}$.
The primary goal of MSDA is to effectively leverage these labeled source data and unlabeled target data to identify the target joint distribution $p_{\rvx, \rvy| \rvu_{\mathcal{T}}}$.

However, identifying the target joint distribution of $\rvx,\rvy | \rvu_{\mathcal{T}}$ using only  $\rvx | \rvu_{\mathcal{T}}$ as observations present a significant challenge, since the possible mappings from $p_{\rvx, \rvy| \rvu_{\mathcal{T}}}$ to $p_{\rvx|\rvu_{\mathcal{T}}}$ are infinite when no extra constraints are given. 
To solve this problem, some assumptions have been proposed to constrain the domain shift, such as covariate shift~\cite{pan2010survey}, target shift~\cite{tachet2020domain,azizzadenesheli2019regularized}, and conditional shift~\cite{cai2019learning,stojanov2021domain}.  
For example, the most conventional covariate shift assumption posits that $p_{\rvy|\rvx}$ is fixed across different domains while $p_{\rvx}$ varies.  Under this assumption, researchers can employ techniques such as  importance reweighting~\cite{shimodaira2000improving}, invariant representation learning~\cite{ganin2015unsupervised}, or cycle consistency~\cite{hoffman2018cycada} for distribution alignment. 
Additionally, target shift assumes that $p_{\rvx|\rvy}$ is fixed while the label distribution $p_{\rvy}$ changes, whereas conditional shift is represented by a fixed $p_{\rvy}$ and a varying $p_{\rvx|\rvy}$. More generally, the minimal change principle has been proposed, which not only unifies the aforementioned assumptions but also enables the theoretical guarantee of the identifiability of the target joint distribution. Specifically, it assumes that $p_{\rvy|\rvu}$ and $p_{\rvx|\rvy,\rvu}$ change independently and the change of $p_{\rvx|\rvy,\rvu}$ is minimal. 
Please refer to Appendix \ref{app:related_works} for further discussion of related works, including domain adaptation and identification.

Although current methods demonstrate the identifiability of the target joint distribution through the minimal change principle, they often impose strict conditions on the data generation process and the number of domains, limiting their practical applicability. For instance, iMSDA~\cite{kong2022partial} presents the component-wise identification of the domain-changed latent variables, subsequently identifying target joint distribution by modeling a data generation process with variational inference. However, this identification requires the following conditions. 
First, a sufficient number of auxiliary variables is employed for the component-wise theoretical guarantees, meaning that when the dimension of latent variables is $n$, a total of $2n+1$ domains are needed. Second, in order to identify high-level invariant latent variables, a component-wise monotonic function between latent variables must be assumed. Third, these methods implicitly assume that label distribution remains stable across domains, despite the prevalence of target shift in real-world scenarios. These conditions are often too restrictive to be met in practice, highlighting the need for a more general approach to identifying latent variables across a wider range of domain shifts.


In an effort to alleviate the need for such strict assumptions, we present a subspace identification theory in this paper that guarantees the disentanglement of domain-invariant and domain-specific variables under more relaxed constraints concerning the number of domains and transformation properties. In contrast to component-wise identification, our subspace identification method demands fewer auxiliary variables (i.e., when the dimension of latent variables is $n$, only $n+1$ domains are required). Additionally, we design a more general data generation process that accounts for target shift and does not necessitate monotonic transformation between latent variables. In this process, we categorize latent variables into four groups based on whether they are influenced by domain index or label.
Building on the theory and causal generation process, we develop a Subspace Identification Guarantee (SIG) model that employs variational inference to identify latent variables. A class-aware condition alignment is incorporated to mitigate the impact of target shift, ensuring the update of the most confident cluster embedding. Our approach is validated through a simulation experiment for subspace identification evaluation and four widely-used public domain adaptation benchmarks for application evaluation. The impressive performance that outperforms state-of-the-art methods demonstrates the effectiveness of our method.

\section{Identifying Target Joint Distribution with Data Generation Process}

\subsection{Data Generation Process}
\label{sec:data_gen}
\begin{wrapfigure}{r}{5cm}
 \vspace{-19pt}
	\centering
	\includegraphics[width=0.35\columnwidth]{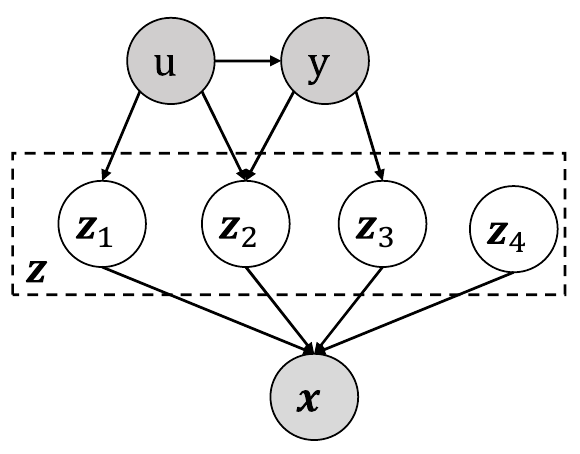}
	\caption{Data generation process, where the gray  the white nodes denote the observed and latent variables, respectively.}
 \medskip\medskip
	\label{fig:data}
 \vspace{-15pt}
\end{wrapfigure}

%

We begin with introducing the data generation process. As shown in Figure \ref{fig:data}, the observed data $\bm{x}\in \mathcal{X}$ are generated by latent variables $\bm{\rvz}\in \mathcal{Z} \subseteq \mathbb{R}^n$. Sequentially, we divide the latent variables $\bm{\rvz}$ into the four parts, i.e. $\bm{\rvz}=\{\bm{\rvz}_{1}, \bm{\rvz}_{2}, \bm{\rvz}_{3}, \bm{\rvz}_{4}\} \in \mathcal{Z} \subseteq \mathbb{R}^n$,  
which are shown as follows.
\begin{itemize}[leftmargin=*,itemsep=-3pt]
\vspace{-5pt}
    \item domain-specific and label-irrelevant variables $\bm{\rvz}_{1}\in \mathbb{R}^{n_1}$.
    \item domain-specific but label-relevant variables $\bm{\rvz}_{2}\in \mathbb{R}^{n_2}$.
    \item domain-invariant and label-relevant variables $\bm{\rvz}_{3} \in \mathbb{R}^{n_3}$.
    \item domain-invariant but label-irrelevant variables $\bm{\rvz}_{4} \in \mathbb{R}^{n_4}$.
\vspace{-5pt}
\end{itemize}


To better understand these latent variables, we provide some examples in DomainNet datasets. First, $\bm{\rvz}_{1} \in \mathbb{R}^{n_1}$ denotes the styles of the images like ``infograph'' and ``sketch'', which are irrelevant to labels. $\bm{\rvz}_{2} \in \mathbb{R}^{n_2}$ denotes the latent variables that can be the texture information relevant to domains and labels. For example, the samples of ``clock'' and ``telephone'' contain some digits, and these digits in these samples are a special texture, which can be used for classification and be influenced by different styles, such as ``infograph''. $\bm{\rvz}_{3} \in \mathbb{R}^{n_3}$ denotes the latent variables that are only relevant to the labels. For example, in the DomainNet dataset, it can be interpreted as the meaning of different classes like ``Bicycle'' or ``Teapot''.  Finally, $\bm{\rvz}_{4}\in \mathbb{R}^{n_4}$ denotes the label-irrelevant latent variables. For example, $\bm{\rvz}_{4}$ can be interpreted as the background that is invariant to domains and labels.


Based on the definitions of these latent variables, we let the observed data be generated from $\bm{\rvz}$ through an invertible and smooth mixing function $g:\mathcal{Z} \rightarrow \mathcal{X}$. 
Due to the target shift, we further consider that the $p_{\rvy}$ is influenced by $\rvu$, i.e. $\rvu \rightarrow \rvy$.

Compared with the existing data generation process like \cite{kong2022partial}, the proposed data generation process is different in three folds. First, $p_{\rvu}$ is independent of $p_{\rvy}$ in the iMSDA \cite{kong2022partial}, so the target shift is not taken into account. Second, the data generation process of iMSDA requires an invertible and monotonic function between latent variables for component-wise identification, which is too strict to be met in practice. Third, to provide a more general way to depict the real-world data, our data generation process introduces the domain-specific but label-relevant latent variables $\bm{\rvz}_{s_2}$ and the domain-invariant but label-irrelevant variables $\rvz_4$.

\subsection{Identifying the Target Joint Distribution}

In this part, we show how to identify the target joint distribution  $p_{\rvx,\rvy|\rvu_{\mathcal{T}}}$  with the help of marginal distribution. By introducing the latent variables and combining the proposed data generation process, we can obtain the following derivation.
\begin{equation}
\begin{split}    p_{\rvx,\rvy|\rvu_{\mathcal{T}}}&=\int_{\bm{\rvz}_{1}}\int_{\rvz_{2}}\int_{\rvz_{3}}\int_{\rvz_{4}}p_{\rvx,\rvy,\bm{\rvz}_{1},\bm{\rvz}_{2},\bm{\rvz}_{3},\bm{\rvz}_{4}|\rvu_{\mathcal{T}}}d\bm{\rvz}_{1}d\bm{\rvz}_{2}d\bm{\rvz}_{3}d\bm{\rvz}_{4}\\&
=\int_{\bm{\rvz}_{1}}\int_{\rvz_{2}}\int_{\rvz_{3}}\int_{\rvz_{4}}p_{\rvx,\bm{\rvz}_{1},\bm{\rvz}_{2},\bm{\rvz}_{3},\bm{\rvz}_{4}|\rvy,\rvu_{\mathcal{T}}}\cdot p_{\rvy|\rvu_{\mathcal{T}}}d\bm{\rvz}_{1}d\bm{\rvz}_{2}d\bm{\rvz}_{3}d\bm{\rvz}_{4}
\\&=\int_{\bm{\rvz}_{1}}\int_{\rvz_{2}}\int_{\rvz_{3}}\int_{\rvz_{4}}p_{\rvx|\bm{\rvz}_{1},\bm{\rvz}_{2},\bm{\rvz}_{3},\bm{\rvz}_{4}}\cdot p_{\bm{\rvz}_{1},\bm{\rvz}_{2},\bm{\rvz}_{3},\bm{\rvz}_{4}|\rvy,\rvu_{\mathcal{T }}} \cdot p_{\rvy|\rvu_{\mathcal{T}}} d\bm{\rvz}_{1}d\bm{\rvz}_{2}d\bm{\rvz}_{3}d\bm{\rvz}_{4}.
\end{split}
\label{equ:tgt_joint_dist_paper}
\vspace{-10pt}
\end{equation}

According to the derivation in Equation (\ref{equ:tgt_joint_dist_paper}), we can identify the target joint distribution by modeling three distributions. First, we need to model $p_{\rvx|\bm{\rvz}_{1},\bm{\rvz}_{2},\bm{\rvz}_{3},\bm{\rvz}_{4}}$, implying that we need to model the conditional distribution of observed data give latent variables, which coincides with a generative model for observed data. Second, we need to estimate the label pseudo distribution of target domain $p_{\rvy|\rvu_{\mathcal{T}}}$. Third, we need to model $p_{\bm{\rvz}_{1},\bm{\rvz}_{2},\bm{\rvz}_{3},\bm{\rvz}_{4}|\rvy,\rvu_{\mathcal{T}}}$ meaning that the latent variables should be identified with auxiliary variables $\rvu,\rvy$ under theoretical guarantees. In the next section, we will introduce how to identify these latent variables with subspace identification block-wise identification results.

\section{Subspace Identifiability for Latent Variables}

\begin{wrapfigure}{r}{3cm}
 \vspace{-10pt}
	\centering
	\includegraphics[width=0.18\columnwidth]{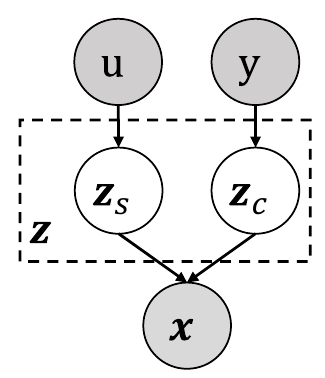}
	\caption{A simple data generalization process for introducing subspace identification.}
 \medskip
	\label{fig:data2_paper}
 \vspace{-19pt}
\end{wrapfigure}
In this section, we provide how to identify the latent variables in Figure \ref{fig:data}. In detail, we first prove that $\rvz_2$ is subspace identifiable and $\rvz_1,\rvz_3$ can be reconstructed from the estimated $\hat{\rvz}_1,\hat{\rvz}_2, \hat{\rvz}_3$. Then we further prove that $\rvz_4$ is block-wise identifiable. 

To clearly introduce the subspace identification theory, we employ a simple data generation process \cite{cai2019learning} as shown in Figure \ref{fig:data2_paper}. In this data generation process, $\rvz_s \in \mathbb{R}^{n_s}$ and $\rvz_c \in \mathbb{R}^{n_c}$ denote the domain-specific and domain-invariant latent variables, respectively. For convenient, we let $\rvz=\{\rvz_s, \rvz_c\}, n=n_s+n_c$. Moreover, we assume $\rvz_s=(z_i)_{i=1}^{n_s}$ and $\rvz_c=(z_i)_{i=n_s+1}^n$. And $\{\rvu,\rvy,\rvx\}$ denote the domain index, labels, and observed data, respectively. And we further let the observed data be generated from $\bm{\rvz}$ through an invertible and smooth mixing function $g:\mathcal{Z} \rightarrow \mathcal{X}$. The subspace identification of $\rvz_s$ means that for each ground-truth $z_{s,i}$, there exits $\hat{\rvz}_s$ and an invertible function $h_{i}:\mathbb{R}^{n}\rightarrow \mathbb{R}$, such that $z_{s,i}=h_{i}(\hat{\rvz}_s)$.


\begin{theorem}
\label{the1_paper}
(\textbf{Subspace Identification of $\rvz_s$.}) We follow the data generation process in Figure \ref{fig:data2_paper} and make the following assumptions:
\begin{itemize}[leftmargin=*,itemsep=-3pt]
    \item A1 (\underline{Smooth and Positive Density}): The probability density function of latent variables is smooth and positive, i.e., $p_{\rvz|\rvu}>0$ over $\mathcal{Z}$ and $\mathcal{U}$.
    \item A2 (\underline{Conditional independent}): Conditioned on $\rvu$, each $z_i$ is independent of any other $z_j$ for $i,j \in \{1,\cdots,n\}, i\neq j$, i.e. $\log p_{\rvz|\rvu}(\rvz|\rvu)=\sum_i^n q_i(z_{i},\rvu)$ where $q_i(z_{i},\rvu)$ is the log density of the conditional distribution, i.e., $q_i:\log p_{z_{i}|\rvu}$.
    \item A3 (\underline{Linear independence}): For any $\rvz_s\in \mathcal{Z}_s\subseteq \mathbb{R}^{n_s}$, there exist $n_s+1$ values of $\rvu$, i.e., $\rvu_j$ with $j=0,1,\cdots,n_s$, such that these $n_s$ vectors $\bm{\rvw}(\rvz,\rvu_j)-\bm{\rvw}(\rvz,\rvu_0)$ with $j=1,\cdots,n_s$ are linearly independent, where vector $\bm{\rvw}(\rvz,\rvu_j)$ is defined as follows:
    \begin{equation}
    \small
        \bm{\rvw}(\rvz,\rvu)=\left(\frac{\partial q_1(z_{1},\rvu)}{\partial z_{1}},\cdots, \frac{\partial q_i(z_{i},\rvu)}{\partial z_{i}},\cdots \frac{\partial q_{n_s}(z_{n_s},\rvu)}{\partial z_{n_s}}\right),
    \end{equation}
\end{itemize}
By modeling the aforementioned data generation process, $\rvz_s$ is subspace identifiable.
\end{theorem}
\textbf{Proof sketch.} First, we construct an invertible transformation $h$ between the ground-truth $\rvz$ and estimated $\hat{\rvz}$. Sequentially, we leverage the variance of different domains to construct a full-rank linear system, where the only solution of $\frac{\partial \rvz_s}{\partial \hat{\rvz}_c}$ is zero. Since the Jacobian of $h$ is invertible, for each $z_{s,i}, i\in\{1,\cdots,n_s\}$, there exists a $h_i$ such that  $z_{s,i}=h_i(\hat{\rvz})$ and $\rvz_s$ is subspace identifiable. 

The proof can be found in Appendix \ref{app:proof_subspace}. The first two assumptions are standard in the component-wise identification of existing nonlinear ICA \cite{kong2022partial,khemakhem2020variational}. The third Assumption means that $p_{\rvz|\rvu}$ should vary sufficiently over $n+1$ domains. Compared to component-wise identification, which necessitates $2n+1$ domains and is likely challenging to fulfill, subspace identification can yield equivalent results in terms of identifying the ground-truth latent variables with only $n+1$ domains. Therefore, subspace identification benefits from a more relaxed assumption.


Based on the theoretical results of subspace identification, we show that the ground-truth $\rvz_1, \rvz_2$ and $\rvz_3$ be reconstructed from the estimated $\hat{\rvz}_1, \hat{\rvz}_2$ and $\hat{\rvz}_3$. For ease of exposition, we assume that $\rvz_1,\rvz_2,\rvz_3$, and $\rvz_4$ correspond to components in $\rvz$ with indices $\{1,\cdots,n_1\},\{n_1+1,\cdots,n_1+n_2\},\{n_1+n_2+1,\cdots,n_1+n_2+n_3\}$, and $\{n_1+n_2+n_3+1, \cdots, n\}$, respectively. 

\begin{corollary}
We follow the data generation in Section \ref{sec:data_gen}, and make the following assumptions which are similar to A1-A3:
\item A4 (\underline{Smooth and Positive Density}): The probability density function of latent variables is smooth and positive, i.e., $p_{\rvz|\rvu, \rvy}>0$ over $\mathcal{Z}$, $\mathcal{U}$, and $\mathcal{Y}$.
\item A5 (\underline{Conditional independent}): Conditioned on $\rvu$ and $\rvy$, each $z_i$ is independent of any other $z_j$ for $i,j \in \{1,\cdots,n\}, i\neq j$, i.e. $\log p_{\rvz|\rvu,\rvy}(\rvz|\rvu,\rvy)=\sum_i^n q_i(z_{i},\rvu,\rvy)$ where $q_i(z_{i},\rvu,\rvy)$ is the log density of the conditional distribution, i.e., $q_i:\log p_{z_{i}|\rvu,\rvy }$.
\item A6 (\underline{Linear independence}): For any $\rvz\in\mathcal{Z}\subseteq\mathbb{R}^n$, there exists $n_1+n_2+n_3+1$ combination of $(\rvu,\rvy)$, i.e. $j=1,\cdots, U$ and $ c=1, \cdots, C$ and $ U\times C - 1=n_1+n_2+n_3$, where $U$ and $C$ denote the number of domains and the number of labels. such that these $n_1+n_2+n_3$ vectors $\bm{\rvw}(\rvz,\rvu_j,\rvy_c)-\bm{\rvw}(\rvz,\rvu_0,\rvy_0)$ are linearly independent, where $\bm{\rvw}(\rvz,\rvu_j,\rvy_c)$ is defined as follows:
\begin{equation}
\small
    \bm{\rvw}(\rvz,\rvu,\rvy)=\left(\frac{\partial q_1(z_{1},\rvu,\rvy)}{\partial z_{1}},\cdots, \frac{\partial q_i(z_{i},\rvu,\rvy)}{\partial z_{i}},\cdots \frac{\partial q_n(z_{n},\rvu,\rvy)}{\partial z_{n}}\right).
\end{equation}
By modeling the data generation process in Section \ref{sec:data_gen}, $\rvz_2$ is subspace identifiable, and $\rvz_1, \rvz_3$ can be reconstructed from $\hat{\rvz}_1, \hat{\rvz}_2$ and $\hat{\rvz}_2, \hat{\rvz}_3$, respectively.
\end{corollary}
\textbf{Proof sketch.} The detailed proof can be found in Appendix \ref{corollary}. First, we construct an invertible transformation $h$ to bridge the relation between the ground-truth $\rvz$ and the estimated $\hat{\rvz}$. Then, we repeatedly use Theorem \ref{the1_paper} three times by considering the changing of labels and domains. Hence, we find that the values of some blocks of the Jacobian of $h$ are zero. Finally, the Jacobian of $h$ can be formalized as Equation (\ref{equ:jacobian2_paper}).
\begin{wraptable}{r}{8cm}
\vspace{-5pt}
\begin{equation}
\small
\renewcommand{\arraystretch}{1.5}
\label{equ:jacobian2_paper}
\begin{gathered}
    \mJ_{h}=\begin{bmatrix}
    \begin{array}{c|c|c|c}
        \mJ_{h}^{1,1} & \mJ_{h}^{1,2}& \mJ_{h}^{1,3}=0& \mJ_{h}^{1,4}=0 \\ \cline{1-4}
        \mJ_{h}^{2,1}=0 & \mJ_{h}^{2,2}& \mJ_{h}^{2,3}=0& \mJ_{h}^{2,4}=0 \\ \cline{1-4}
        \mJ_{h}^{3,1}=0 & \mJ_{h}^{3,2}& \mJ_{h}^{3,3}& \mJ_{h}^{3,4}=0 \\ \cline{1-4}
        \mJ_{h}^{4,1} & \mJ_{h}^{4,2}& \mJ_{h}^{4,3}& \mJ_{h}^{4,4} \\
    \end{array}
    \end{bmatrix},
\end{gathered}
\end{equation}
\vspace{-8pt}
\end{wraptable}
where $\mJ_h$ denotes the Jacobian of $h$ and $\mJ^{ij}_h:=\frac{\partial \rvz_i}{\partial\hat{\rvz}_j}$ and $i,j\in\{1,2,3,4\}$. Since  $h(\cdot)$ is invertible, $\mJ_{h}$ is a full-rank matrix. Therefore, for each $z_{2,i}, i\in\{n_1+1,\cdots,n_1+n_2\}$, there exists a $h_{2,i}$ such that  $z_{2,i}=h_i(\hat{\rvz}_2)$. Moreover, for each $z_{1,i}, i\in\{1,\cdots,n_1+1\}$, there exists a $h_{1,i}$ such that  $z_{1,i}=h_{1,i}(\hat{\rvz}_1,\hat{\rvz}_2)$. And for each $z_{3,i}, i\in\{n_1+n_2+1,\cdots,n_1+n_2+n_3\}$, there exists a $h_{3,i}$ such that  $z_{3,i}=h_{3,i}(\hat{\rvz}_2,\hat{\rvz}_3)$. Then we prove that $\rvz_4$ is block-wise identifiable, which means that there exists an invertible function $h_4, s.t. \rvz_4=h_4(\hat{\rvz}_4)$.


\begin{lemma}
\label{lamma}\cite{kong2022partial}
Following the data generation process in Section \ref{sec:data_gen} and the assumptions A4-A6 in Theorem \ref{the1}, we further make the following assumption:
\begin{itemize}[leftmargin=*,itemsep=-3pt]
    \item A7 (\underline{Domain Variability}: For any set $A_{\rvz} \subseteq \mathcal{Z}$) with the following two properties: 1) $A_{\rvz}$ has nonzero probability measure, i.e. $\mathbb{P}[\rvz\in A_{\rvz}|\{\rvu=\rvu', \rvy=\rvy'\}]>0$ for any $\rvu'\in \mathcal{U}$ and $\rvy'\in \mathcal{Y}$. 2) $A_{\rvz}$ cannot be expressed as $B_{\rvz_4} \times \mathcal{Z}_1\times \mathcal{Z}_2\times \mathcal{Z}_3$ for any $B_{\rvz_4}\subset \mathcal{Z}_4$.
\end{itemize}
$\exists \rvu_1, \rvu_2 \in \mathcal{U}$ and $\rvy_1,\rvy_2 \in \mathcal{Y}$, such that $\int_{\rvz\in \mathcal{A}_{\rvz}}p_{\rvz|\rvu_1,\rvy_1}d\rvz \neq \int_{\rvz\in \mathcal{A}_{\rvz}}p_{\rvz|\rvu_2,\rvy_2}d\rvz$. By modeling the data generation process in Section 2.1, the $\rvz_4$ is block-wise identifiable. 
\end{lemma}

The proof can be found in Appendix \ref{app:proof_block}. Lemma \ref{lamma} shows that $\rvz_4$ can be block-wise identifiable when the $p_{\rvz|\rvu}$ changes sufficiently across domains. 

In summary, we can obtain the estimated latent variables $\hat{\rvz}$ with the help of subspace identification and block-wise identification.

\section{Subspace Identification Guarantee Model}
Based on the theoretical results, we proposed the Subspace Identification Guaranteed model (SIG) as shown in Figure \ref{fig:sig_model}, which contains a variational-inference-based neural architecture to model the marginal distribution and a class-aware conditional alignment to mitigate the target shift. 
\begin{figure}[t]
  \centering
	\includegraphics[width=0.85\columnwidth]{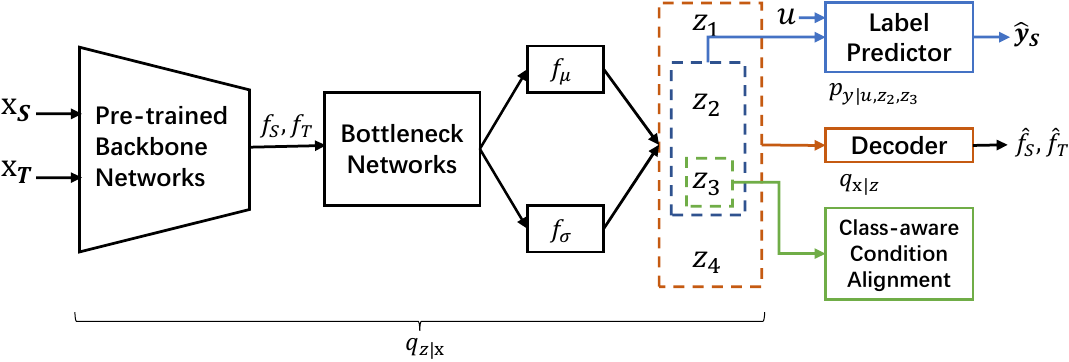}
  \caption{The framework of the Subspace Identification Guarantee model. The pre-trained backbone networks are used to extract the feature $f$ from observed data. The bottleneck and $f_{\mu},f_{\sigma}$ are used to generate $\bm{\rvz}$ with a  reparameterization trick. Label predictor takes $\rvz_2,\rvz_3$, and $\rvu$ as input to model $p_{\rvy|\rvu,\rvz_2,\rvz_3}$. The decoder is used to model the marginal distribution. Finally, $\rvz_2$ is used for class-aware conditional alignment.} 
  \label{fig:sig_model} 
\end{figure}

\subsection{Variational-Inference-based Neural Architecture} \label{sec:vae}

According to the data generation process in Figure \ref{fig:data}, we first derive the evidence lower bound (ELBO) in Equation (\ref{equ:elbo}).
\begin{equation}
\small
\begin{split}
    {ELBO}=&\mathbb{E}_{q_{\bm{\rvz}|\rvx}(\bm{\rvz}|\rvx)}\ln{p_{\rvx|\bm{\rvz}}}(\rvx|\bm{\rvz}) + \mathbb{E}_{q_{\bm{\rvz}|\rvx}(\bm{\rvz}|\rvx)}\ln{p_{\rvy|\rvu,\bm{\rvz}_{2},\bm{\rvz}_{3}}(\rvy|\rvu,\bm{\rvz}_{2},\bm{\rvz}_{3})} \\&+ \mathbb{E}_{q_{\bm{\rvz}|\rvx}(\bm{\rvz}|\rvx)}\ln{p_{\rvu|\bm{\rvz}}}(\rvu|\bm{\rvz}) - D_{KL}(q_{\bm{\rvz}|\rvx}(\bm{\rvz}|\rvx)||p_{\bm{\rvz}}(\rvz)).
\end{split}
\label{equ:elbo}
\vspace{-10pt}
\end{equation}

Since the reconstruction of $\rvu$ is not the optimization goal, we remove the reconstruction of $\rvu$ and we rewrite Equation (\ref{equ:elbo}) as the objective function in Equation (\ref{equ:loss_func}).
\begin{equation}
\small
\label{equ:loss_func}
\begin{split}
    \mathcal{L}_{elbo} &= \mathcal{L}_{vae} + \mathcal{L}_y \\
    \mathcal{L}_{vae} &= -\mathbb{E}_{q_{\bm{\rvz}|\rvx}(\bm{\rvz}|\rvx)}\ln{p_{\rvx|\bm{\rvz}}}(\rvx|\bm{\rvz}) + D_{KL}(q_{\bm{\rvz}|\rvx}(\bm{\rvz}|\rvx)||p_{\bm{\rvz}}(\rvz)) \\
    \mathcal{L}_y &= -\mathbb{E}_{q_{\bm{\rvz}|\rvx}(\bm{\rvz}|\rvx)}\ln{p_{\rvy|\rvu,\bm{\rvz}_{2},\bm{\rvz}_{3}}(\rvy|\rvu,\bm{\rvz}_{2},\bm{\rvz}_{3})}.
\end{split}
\end{equation}

To minimize the pairwise class confusion, we further employ the minimum class confusion \cite{jin2020minimum} into the classification loss $\mathcal{L}_y$.
According to the objective function in Equation (\ref{equ:loss_func}), we illustrate how to implement the SIG model in Figure \ref{fig:sig_model}. 
First, we take the observed data $\rvx_{S_i}$ and $\rvx_{T}$ from the source domains and the target domain as the inputs of the pre-trained backbone networks like ResNet50 and extract the feature $f_{\mathcal{S}_i}$ and $f_{\mathcal{T}}$. Sequentially, we employ an MLP-based encoder, which contains bottleneck networks, $f_{\mu}$ and $f_{\sigma}$, to extract the latent variables $\rvz$. Then, we take $\rvz$ to reconstruct the pre-trained features via an MLP-based decoder to estimate the marginal distribution $p_{\rvx|\rvu}$. Finally, we take the $\rvz_2, \rvz_3$, and the domain embedding to predict the source label to estimate $p_{\rvy|\rvu,\rvz_2,\rvz_3}$.

\vspace{-10pt}
\subsection{Class-aware Conditional Alignment}

To estimate the target label distribution $p_{\rvy|\rvu_{\mathcal{T}}}$ and mitigate the influence of target shift, we propose the class-aware conditional alignment to automatically adjust the conditional alignment loss for each sample. Formally, it can be written as 
\begin{equation}
\begin{split}
\mathcal{L}_{a} = \frac{1}{C} \sum_{i=1}^C w^{(i)}\cdot p_{{\hat{\rvy}}^{(i)}}||\hat{\rvz}_{3, \mathcal{S}}^{(i)} - \hat{\rvz}_{3, \mathcal{T}}^{(i)}||_2, \quad
    w^{(i)} = 1 + \exp{(-H(p_{\hat{\rvy}^{(i)}}))},
\end{split}
\label{equ:align3}
\end{equation}
where $C$ denotes the class number; $\hat{\rvz}_{3, \mathcal{S}}^{(i)}$ and $\hat{\rvz}_{3, \mathcal{T}}^{(i)}$ denote the latent variables of $i$th class from source and target domain, respectively; $w^{(i)}$ denotes the prediction uncertainty of each class in the target domain; $p_{\hat{\rvy}^{(i)}}$ denotes the estimated label probability density of $i$th class; $H$ denotes the entropy. 


The aforementioned class-aware conditional alignment is based on the existing conditional alignment loss, which can be formalized in Equation (\ref{equ:align1}).
\begin{equation}
    \mathcal{L}_{a} = \frac{1}{C} \sum_{i=1}^C ||\hat{\rvz}_{3, \mathcal{S}}^{(i)} - \hat{\rvz}_{3,\mathcal{T}}^{(i)}||_2,
    \label{equ:align1}
\end{equation}
However, conventional conditional alignment usually suffers from two drawbacks including misestimated centroid and low-quality pseudo-labels. First, the conditional alignment method implicitly assumes that the feature centroids from different domains are the same. But it is hard to estimate the correct centroids of the target domain for the class with low probability density. Second, conditional alignment heavily relies on the quality of the pseudo label. But existing methods usually use pseudo labels without any discrimination, which might result in false alignment. To solve these problems, we consider two types of reweighting.

\textbf{Distribution-based Reweighting for Misestimated Centroid:} Although the conditional alignment method implicitly assumes that the feature centroids from different domains are the same, it is hard to estimate the correct centroids of the target domain for the class with low probability density. To address this challenge, one straightforward solution is to consider the label distribution of the target domain. To achieve this, we employ the technique of black box shift estimation method (BBSE) \cite{lipton2018detecting} to estimate the label distribution from the target domain $p_{\hat{\rvy}}$.
So we use the estimated label distribution to reweight the conditional alignment loss in Equation (\ref{equ:align1}).

\textbf{Entropy-based Reweighting for Low-quality Pseudo-labels:} conditional alignment heavily relies on the quality of the pseudo label. However, existing methods usually use pseudo labels without any discrimination, which might result in false alignment. To address this challenge, we consider the prediction uncertainty of each class in the target domain. Technologically, for each sample in the target dataset, we calculate the entropy-based weights via the prediction results which are shown as $w^{(i)}$ in Equation (\ref{equ:align3}).

By combining the distribution-based weights and the entropy-based weight, we can obtain the class-aware conditional alignment as shown in Equation (\ref{equ:align3}). Hence the total loss of the Subspace Identification Guarantee model can be formalized as follows:
\begin{equation}
    \mathcal{L}_{total} = \mathcal{L}_y + \beta \mathcal{L}_{vae} + \alpha \mathcal{L}_a ,
\end{equation}
where $\alpha,\beta$ denote the hyper-parameters.

\section{Experiments}
\subsection{Experiments on Simulation Data}
In this subsection, we illustrate the experiment results of simulation data to evaluate the theoretical results of subspace identification in practice. 

\subsubsection{Experimental Setup}

\textbf{Data Generation.} We generate the simulation data for binary classification with 8 domains. To better evaluate our theoretical results, we follow the data generation process in Figure \ref{fig:data2_paper}, which includes two types of latent variables, i.e., domain-specific latent variables $\rvz_s$ and domain-invariant latent variables $\rvz_c$. We let the dimensions of $\bm{z}_s$ and $\bm{z}_c$ be both 2. Moreover, $\bm{z}_s$ are sampled from $u$ different mixture of Gaussians, and $\bm{z}_c$ are sampled from a factorized Gaussian distribution. We let the data generation process from latent variables to observed variables be MLPs with the Tanh activation function. We further split the simulation dataset into the training set, validation set, and test set.

\textbf{Evaluation Metrics.} First, we employ the accuracy of the target domain data to measure the classification performance of the model. Second, we compute the Mean Correlation Coefficient (MCC) between the ground-truth $\bm{z}_s$ and the estimated $\hat{\bm{z}}_s$ on the test dataset to evaluate the component-wise identifiability of domain-specific latent variables. A higher MCC denotes the better identification performance the model can achieve. Third, to evaluate the performance of subspace identifiability of domain-specific latent variables, we first use the estimated $\hat{\bm{z}}_s$ from the validation set to regress each dimension of the ground-truth $\bm{z}_s$ from the validation set with the help of a MLPs. Sequentially, we take the $\hat{\bm{z}}_s$ from the test set as input to estimate how well the MLPs can reconstruct the ground-truth $\bm{z}_s$ from the test set, so we employ Root Mean Square Error (RMSE) to measure the extent of subspace identification. A low RMSE denotes that there exists a transformation $h_i$ between $\bm{z}_{s,{i}}$ and $\hat{\bm{z}}_{s,1}, \hat{\bm{z}}_{s,2}$, i.e. $\bm{z}_{s,{i}}=h_i(\hat{\bm{z}}_{s,1}, \hat{\bm{z}}_{s,2}), i\in\{1,2\}$. For the scenario where the number of domains is less than 8, we first fix the target domain and then try all the combinations of the source domains. And we publish the average performance of all the combinations. We repeat each experiment over 3 random seeds.

\subsubsection{Results and Discussion}

\begin{wraptable}{r}{11.5cm}
    \centering
     \vspace{-0.3cm}
    \caption{Experiments results on simulation data. }
    \vspace{0.3cm}
    \label{tab:simulation}
    
\begin{tabular}{l|c|ccc}
\toprule
\multicolumn{1}{c|}{State}                                                               & U & ACC            & MCC            & RMSE           \\ \midrule 
\multirow{3}{*}{\begin{tabular}[c]{@{}l@{}}Component-wise\\ Identification\end{tabular}} & 8 & 0.9982(0.0004) & 0.9037(0.0087) & 0.0433(0.0051) \\ & 6 & 0.9982(0.0007) & 0.8976(0.0162) & 0.0439(0.0073) \\& 5 & 0.9982(0.0007) & 0.8973(0.0131) & 0.0441(0.0055) \\ \midrule
\multirow{2}{*}{\begin{tabular}[c]{@{}l@{}}Subspace \\ Identification\end{tabular}}      & 4 & 0.9233(0.2039) & 0.8484(0.1452) & 0.0582(0.0431) \\& 3 & 0.8679(0.2610) & 0.8077(0.1709) & 0.0669(0.0482) \\ \midrule
No Identification& 2 & 0.5978(0.3039) & 0.6184(0.2093) & 0.1272(0.0608) \\ \bottomrule
\end{tabular}
\end{wraptable}

The experimental results of the simulation dataset are shown in Table \ref{tab:simulation}. According to the experiment results, we can obtain the following conclusions: 1) We can find that the values of MCC increase along with the number of domains. Moreover, the values of MCC are high (around 0.9) and stable when the number of domains is larger than 5. This result corresponds to the theoretical result of component-wise identification, where a certain number of domains (i.e. $2n+1$) are necessary for component-wise identification. 2) We can find that the values of RMSE decrease along with the number of domains. Furthermore, the values of RMSE are low and stable (less than 0.07) when the number of domains is larger than 3, but it drops sharply when $u=2$. These experimental results coincide with the theoretical results of subspace identification as well as the intuition where a certain number of domains are necessary for subspace identification (i.e. $n_s+1$). 3) According to the experimental results of ACC, we can find that the accuracy grows along with the number of domains and its changing pattern is relevant to that of RMSE, i.e., the performance is stable when the number of domains is larger than 3. \textcolor{black}{The ACC results also indirectly support the results of subspace identification, since one straightforward understanding of subspace identification is that the domain-specific information is preserved in $\hat{z}_s$. 
And the latent variables are well disentangled with the help of subspace identification, which benefits the model performance.}

\subsection{Experiments on Real-world Data}
\subsubsection{Experimental Setup}

\begin{table}[]
\setlength{\abovecaptionskip}{5pt}
\caption{Classification results on the Office-Home and ImageCLEF datasets. For the Office-Home dataset, We employ ResNet50 as the backbone network. For the ImageCLEF dataset, we employ ResNet18 as the backbone network. }
\label{tab:exp_office_clef}
\setlength{\tabcolsep}{1.55mm}{
\begin{tabular}{l|ccccc|cccc}
\toprule
\multicolumn{1}{c|}{\multirow{2}{*}{Model}} & \multicolumn{5}{c|}{Office-Home}               & \multicolumn{4}{c}{ImageCLEF}                                 \\ \cmidrule{2-10}
                       & Art  & Clipart & Product & RealWorld & Average & P                    & C    & I    & Average                  \\ \midrule
\textbf{Source Only \cite{he2016deep}}           & 64.5 & 52.3    & 77.6    & 80.7      & 68.8    & 77.2                 & 92.3 & 88.1 & 85.8                     \\
\textbf{DANN \cite{ganin2015unsupervised}}                   & 64.2 & 58.0      & 76.4    & 78.8      & 69.3    & 77.9                 & 93.7 & 91.8 & 87.8                     \\
\textbf{DAN \cite{long2015learning}}                   & 68.2 & 57.9    & 78.4    & 81.9      & 71.6    & 77.6                 & 93.3 & 92.2 & 87.7                     \\
\textbf{DCTN \cite{xu2018deep}}                  & 66.9 & 61.8    & 79.2    & 77.7      & 71.4    & 75.0                   & 95.7 & 90.3 & 87.0                       \\
\textbf{MFSAN \cite{zhu2019aligning}}                & 72.1 & 62.0      & 80.3    & 81.8      & 74.1    & 79.1                 & 95.4 & 93.6 & 89.4                     \\
\textbf{WADN \cite{shui2021aggregating}}                 & 75.2 & 61.0      & 83.5    & 84.4      & 76.1    & 77.7 & 95.8 & 93.2 & 88.9    \\
\textbf{iMSDA \cite{kong2022partial}}                 & 75.4 & 61.4    & 83.5    & 84.4      & 76.2    & 79.2                 & 96.3 & \textbf{94.3} & 90.0                       \\ \midrule
\textbf{SIG}                    & \textbf{76.4} & \textbf{63.9}    & \textbf{85.4}    & \textbf{85.8}      & \textbf{77.8}    & \textbf{79.3}                 & \textbf{97.3} & \textbf{94.3} & \textbf{90.3} \\ \bottomrule
\end{tabular}}
\end{table}

\textbf{Datasets:} We consider four benchmarks: Office-Home, PACS, ImageCLEF, and DomainNet. For each dataset, we let each domain be a target domain and the other domains be the source domains. For the DomainNet dataset, we equip a cross-attention module to the ResNet101 backbone networks for better usage of domain knowledge. We also employ the alignment of MDD \cite{zhang2019bridging}. For the Office-Home and ImageCLEF datasets, we employ the pre-trained ResNet50 with an MLP-based classifier. For the PACS dataset, we use ResNet18 with an MLP-based classifier. The implementation details are provided in the Appendix \ref{app:imple}. We report the average results over 3 random seeds.

\begin{wraptable}{r}{8.7cm}
\vspace{-0.3cm}
\caption{Classification results on the PACS datasets. We employ ResNet18 as the backbone network. Experiment results of other compared methods are taken from 
(\cite{kong2022partial}). }
\vspace{0.3cm}
    \label{tab:pacs_main}
    \begin{tabular}{l|cccc|c}
    \toprule
        \textbf{Model} & A & C & P & S & Average \\ \midrule
        \textbf{Source Only \cite{he2016deep}} & 74.9 & 72.1 & 94.5 & 64.7  & 76.7 \\ 
        \textbf{DANN \cite{ganin2015unsupervised}} & 81.9  & 77.5 & 91.8 & 74.6 & 81.5 \\ 
        \textbf{MDAN \cite{zhao2018adversarial}} & 79.1  & 76.0  & 91.4 & 72.0  & 79.6 \\ 
        \textbf{WBN \cite{mancini2018boosting}} & 89.9  & 89.7  & 97.4  & 58.0  & 83.8 \\ 
        \textbf{MCD \cite{saito2018maximum}} & 88.7  & 88.9 & 96.4  & 73.9  & 87.0 \\ 
        \textbf{M3SDA \cite{peng2019moment}} & 89.3  & 89.9 & 97.3  & 76.7  & 88.3 \\ 
        \textbf{CMSS \cite{yang2020curriculum}} & 88.6  & 90.4  & 96.9  & 82.0 & 89.5 \\ 
        \textbf{LtC-MSDA \cite{wang2020learning}} & 90.1 & 90.4 & 97.2 & 81.5 & 89.8 \\ 
        \textbf{T-SVDNet \cite{li2021t}} & 90.4 & 90.6 & 98.5 & 85.4 & 91.2 \\ 
        \textbf{iMSDA \cite{kong2022partial}} & 93.7  & 92.4  & 98.4 & 89.2 & 93.4 \\ \midrule
        \textbf{SIG} & \textbf{94.1}  & \textbf{93.6}  & \textbf{98.6} & \textbf{89.5}  & \textbf{93.9} \\ \bottomrule
    \end{tabular}
\end{wraptable}
\textbf{Baselines:} Besides the classical approaches for single source domain adaptation like DANN \cite{ganin2015unsupervised}, DAN \cite{long2015learning}, MCD \cite{saito2018maximum}, and ADDA \cite{tzeng2017adversarial}. We also compare our method with several state-of-the-art multi-source domain adaptation methods, for example, MIAN-$\gamma$ \cite{park2021information}, T-SVDNet \cite{li2021t}, LtC-MSDA \cite{wang2020learning}, SPS \cite{wang2022self}, and PFDA \cite{fu2021partial}. Moreover, we further consider the WADN \cite{shui2021aggregating}, which is devised for the target shift of multi-source domain adaptation. For a fair comparison, we employ the same pre-train backbone networks instead of the pre-trained features for WADN in the original paper. We also consider the latest iMSDA \cite{kong2022partial}, which addresses the MSDA via component-wise identification. 

\subsubsection{Results and Discussion}



Experimental results on Office-Home, ImageCLEF, PACS, and DomainNet are shown in Table \ref{tab:exp_office_clef}, \ref{tab:pacs_main}, and \ref{tab:domainnet_main}, respectively. Experiment results of other compared methods are provided in Appendix \ref{app:real_exp}. 

According to the experiment results of the Office-Home dataset on the left side of Table \ref{tab:exp_office_clef}, our SIG model significantly outperforms all other baselines on all the transfer tasks. Specifically, our method outperforms the most competitive baseline by a clear margin of $1.3\%-4\%$ and promotes the classification accuracy substantially on the hard transfer task, e.g. Clipart. It is noted that our method achieves a better performance than that of WADN, which is designed for the target shift scenario. This is because our method not only considers how the domain variables influence the distribution of labels but also identifies the latent variables of the data generation process. Moreover, our SIG method also outperforms iMSDA, indirectly reflecting that the proposed data generation process is closer to the real-world setting and the subspace identification can achieve better disentanglement performance under limited auxiliary variables.

For datasets like ImageCLEF and PACS, our method also achieves the best-averaged results. In detail, we achieved a comparable performance in all the transfer tasks in the ImageCLFE dataset. In the PACS dataset, our SIG method still performs better than the latest compared methods like iMSDA and T-SVDNet in some challenging transfer tasks like Cartoon.
Finally, we also consider the most challenging dataset, DomainNet, which contains more classes and more complex domain shifts. Results in Table \ref{tab:domainnet_main} show the significant performance of the proposed SIG method, which provides $3.3\%$ averaged promotion, although the performance in the task of Sketch is slightly lower than that of DRT+ST. Compared with iMSDA, our SIG overpasses by a large margin under a more general data generation process.

\textbf{Ablation Study:}
To evaluate the effectiveness of individual loss terms, we also devise the two model variants. 1) \textbf{SIG-sem}: remove the class-aware alignment loss. 2) \textbf{SIG-vae}: remove the reconstruction loss and the KL divergence loss.
Experiment results on the Office-Home dataset are shown in Figure \ref{fig:ablation}. We can find that the class-aware alignment loss plays an important role in the model performance, reflecting that the class-aware alignment can mitigate the influence of target shift. We also discover that incorporating the reconstruction and KL divergence has a positive impact on the overall performance of the model, which shows the necessity of modeling the marginal distributions.


\begin{table}[t]
\setlength{\abovecaptionskip}{5pt}
    \centering
    \caption{Classification results on the DomainNet datasets. We employ ResNet101 as the backbone network. Experiment results of other compared methods are taken from (\cite{li2021dynamic} and \cite{wang2022self}).}\label{tab:domainnet_main}
    \setlength{\tabcolsep}{1.9mm}{
    \begin{tabular}{l|cccccc|c}
    \toprule
        \textbf{Model} & Clipart & Infograph & Painting & Quickdraw & Real & Sketch & Average \\ \midrule
        \textbf{Source Only \cite{he2016deep}} & 52.1 & 23.4  & 47.6  & 13.0  & 60.7  & 46.5 & 40.6 \\ 
        \textbf{ADDA \cite{tzeng2017adversarial}} & 47.5  & 11.4  & 36.7  & 14.7  & 49.1  & 33.5  & 32.2 \\ 
        \textbf{MCD \cite{saito2018maximum}} & 54.3  & 22.1  & 45.7  & 7.6  & 58.4  & 43.5  & 38.5 \\ 
        \textbf{DANN \cite{ganin2015unsupervised}} & 60.6  & 25.8  & 50.4  & 7.7 & 62.0 & 51.7  & 43 \\ 
        \textbf{DCTN \cite{xu2018deep}} & 48.6  & 23.5  & 48.8  & 7.2 & 53.5  & 47.3  & 38.2 \\ 
        \textbf{M$^3$SDA-$\beta$ \cite{peng2019moment}} & 58.6  & 26.0  & 52.3  & 6.3  & 62.7  & 49.5  & 42.6 \\ 
        \textbf{ML\_MSDA \cite{li2020mutual}} & 61.4  & 26.2  & 51.9  & 19.1  & 57.0 & 50.3  & 44.3 \\ 
        \textbf{meta-MCD \cite{li2020online}} & 62.8  & 21.4  & 50.5  & 15.5 & 64.6  & 50.4  & 44.2 \\ 
        \textbf{LtC-MSDA \cite{wang2020learning}} & 63.1  & 28.7  & 56.1  & 16.3  & 66.1 & 53.8  & 47.4 \\ 
        \textbf{CMSS \cite{yang2020curriculum}} & 64.2  & 28.0  & 53.6  & 16.9  & 63.4  & 53.8  & 46.5 \\ 
        \textbf{DRT+ST \cite{li2021dynamic}} & 71.0  & 31.6  & 61.0  & 12.3  & 71.4  & \textbf{60.7}  & 51.3 \\ 
        \textbf{SPS \cite{wang2022self}} & 70.8 & 24.6 & 55.2 & 19.4 & 67.5 & 57.6 & 49.2 \\ 
        \textbf{PFDA \cite{fu2021partial}} & 64.5 & 29.2 & 57.6 & 17.2 & 67.2 & 55.1 & 48.5 \\ 
        \textbf{iMSDA \cite{kong2022partial}} & 68.1 & 25.9 & 57.4 & 17.3 & 64.2 & 52.0 & 47.5 \\ 
        \midrule
        \textbf{SIG} & \textbf{72.7}  & \textbf{32.0}  & \textbf{61.5}  & \textbf{20.5} & \textbf{72.4}    & 59.5  & 53.0 \\ \bottomrule
    \end{tabular}
    }
\end{table}

\begin{figure}[t]
  \centering
	\includegraphics[width=\columnwidth]{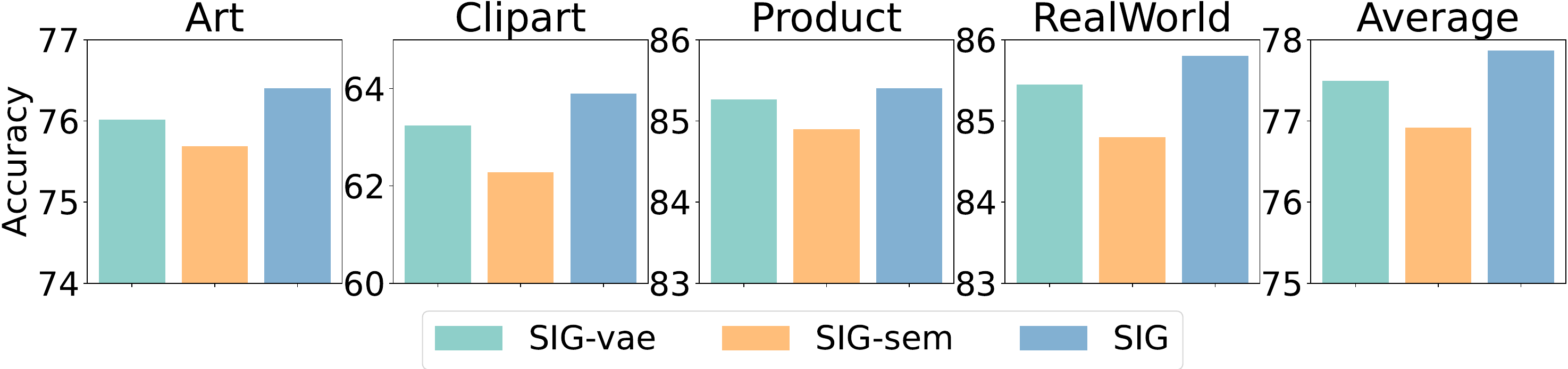}
  \caption{Ablation study on the Office-Home dataset. we explore the impact of different loss terms.} 
  \label{fig:ablation} 
\end{figure}


\vspace{-7pt}
\section{Conclusion}
\vspace{-10pt}
This paper presents a general data generation process for multi-source domain adaptation, which coincides with real-world scenarios. Based on this data generation process, we prove that the changing latent variables are subspace identifiable, which provides a novel solution for disentangled representation. Compared with the existing methods, the proposed subspace identification theory requires fewer auxiliary variables and frees the model from the monotonic transformation of latent variables, making it possible to apply the proposed method to real-world data. Experiment results on several mainstream benchmark datasets further evaluate the effectiveness of the proposed subspace identification guaranteed model. In summary, this paper takes a meaningful step for causal representation learning.
\textbf{Broader Impacts:} SIG disentangles the latent variables to create a model that is more transparent, thereby aiding in the reduction of bias and the promotion of fairness. 
\textbf{Limitation:}
However, the proposed subspace identification still requires several assumptions that might not be met in real-world scenarios. Therefore, how further to relax the assumptions, i.e., conditional independent assumption, would be an interesting future direction.

\section{Acknowledgements} 
We are very grateful to the anonymous reviewers for their help in improving the paper. This research was supported in part by the National Key R\&D Program of China (2021ZD0111501), the National Science Fund for Excellent Young Scholars (62122022), Natural Science Foundation of China (61876043, 61976052), the major key project of PCL (PCL2021A12). This project is also partially supported by NSF Grant 2229881, the National Institutes of Health (NIH) under Contract R01HL159805, a grant from Apple Inc., a grant from KDDI Research Inc., and generous gifts from Salesforce Inc., Microsoft Research, and Amazon Research.


\bibliography{main}
\bibliographystyle{plain}
\clearpage

\renewcommand{\arraystretch}{1.5}

\maketitle

\appendix
{\footnotesize 
\tableofcontents
}

\section{Identify Target Joint Distribution}
\label{app:iden_joint_dist}
We show how to derive the conditions of identifying the target joint distribution with the help of the proposed data generation process, which is shown in Equation (\ref{equ:tgt_joint_dist}).

\begin{equation}
\begin{split}    p_{\rvx,\rvy|\rvu_{\mathcal{T}}}&\overset{(1)}{=}\int_{\bm{\rvz}_{1}}\int_{\rvz_{2}}\int_{\rvz_{3}}\int_{\rvz_{4}}p_{\rvx,\rvy,\bm{\rvz}_{1},\bm{\rvz}_{2},\bm{\rvz}_{3},\bm{\rvz}_{4}|\rvu_{\mathcal{T}}}d\bm{\rvz}_{1}d\bm{\rvz}_{2}d\bm{\rvz}_{3}d\bm{\rvz}_{4}\\&
\overset{(2)}{=}\int_{\bm{\rvz}_{1}}\int_{\rvz_{2}}\int_{\rvz_{3}}\int_{\rvz_{4}}p_{\rvx,\bm{\rvz}_{1},\bm{\rvz}_{2},\bm{\rvz}_{3},\bm{\rvz}_{4}|\rvy,\rvu_{\mathcal{T}}}\cdot p_{\rvy|\rvu_{\mathcal{T}}}d\bm{\rvz}_{1}d\bm{\rvz}_{2}d\bm{\rvz}_{3}d\bm{\rvz}_{4}
\\&\overset{(3)}{=}\int_{\bm{\rvz}_{1}}\int_{\rvz_{2}}\int_{\rvz_{3}}\int_{\rvz_{4}}p_{\rvx|\bm{\rvz}_{1},\bm{\rvz}_{2},\bm{\rvz}_{3},\bm{\rvz}_{4}}\cdot p_{\bm{\rvz}_{1},\bm{\rvz}_{2},\bm{\rvz}_{3},\bm{\rvz}_{4}|\rvy,\rvu_{\mathcal{T }}} \cdot p_{\rvy|\rvu_{\mathcal{T}}} d\bm{\rvz}_{1}d\bm{\rvz}_{2}d\bm{\rvz}_{3}d\bm{\rvz}_{4}.
\end{split}
\label{equ:tgt_joint_dist}
\end{equation}

The derivation in Equation (\ref{equ:tgt_joint_dist}) can be separated into three steps. $(1)$ We introduce the latent variables $\rvz_1,\rvz_2,\rvz_3$, and $\rvz_4$, which have mentioned in Section 2.1. $(2)$ We factorize the joint distribution in $(1)$ into $p_{\rvx,\rvz_1,\rvz_2,,\rvz_3,,\rvz_4|\rvy,\rvu_{\mathcal{T}}}$ and $p_{\rvy|\rvu_{\mathcal{T}}}$ with the help of Bayes Rule. $(3)$, we further use Bayes Rule to factorize $p_{\rvx,\rvz_1,\rvz_2,,\rvz_3,,\rvz_4|\rvy,\rvu_{\mathcal{T}}}$. Since $\rvx$ is independent of $\rvu, \rvy$ given $\rvz_1,\rvz_2,,\rvz_3,,\rvz_4$, we can obtain $p_{\rvx|\rvz_1,\rvz_2,,\rvz_3,,\rvz_4}$.

The aforementioned factorization tells us that we need to model three distributions to identify the target joint distribution.
First, we need to model $p_{\rvx|\bm{\rvz}_{1},\bm{\rvz}_{2},\bm{\rvz}_{3},\bm{\rvz}_{4}}$, implying that we need to model the conditional distribution of observed data give latent variables, which coincides with a generative model for observed data. Second, we need to estimate the label pseudo distribution of target domain $p_{\rvy|\rvu_{\mathcal{T}}}$. Third, we need to model $p_{\bm{\rvz}_{1},\bm{\rvz}_{2},\bm{\rvz}_{3},\bm{\rvz}_{4}|\rvy,\rvu_{\mathcal{T}}}$ meaning that the latent variables should be identified with theoretical guarantees. In the next section, we will introduce how to identify these latent variables with subspace identification block-wise identification results. 



\section{Proof of the Identification of latent variables}
\subsection{Proof of Subspace Identification}
\label{app:proof_subspace}

\begin{figure}[!ht]
	\centering
\includegraphics[width=0.18\columnwidth]{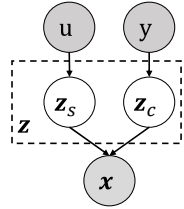}
	\caption{A simple data generalization process for introducing subspace identification.}
	\label{fig:data2}
\end{figure}

In this subsection, we provide proof of the subspace identification based on the data generation process in Figure \ref{fig:data2}.

\begin{theorem}
\label{the1}
(\textbf{Subspace Identification of $\rvz_s$.}) We follow the data generation process in Figure \ref{fig:data2} and make the following assumptions:
\begin{itemize}[leftmargin=*,itemsep=-3pt]
    \item A1 (\underline{Smooth and Positive Density}): The probability density function of latent variables is smooth and positive, i.e., $p_{\rvz|\rvu}>0$ over $\mathcal{Z}$ and $\mathcal{U}$.
    \item A2 (\underline{Conditional independent}): Conditioned on $\rvu$, each $z_i$ is independent of any other $z_j$ for $i,j \in \{1,\cdots,n\}, i\neq j$, i.e. $\log p_{\rvz|\rvu}(\rvz|\rvu)=\sum_i^n q_i(z_{i},\rvu)$ where $q_i(z_{i},\rvu)$ is the log density of the conditional distribution, i.e., $q_i:\log p_{z_{i}|\rvu}$.
    \item A3 (\underline{Linear independence}): For any $\rvz_s\in \mathcal{Z}_s\subseteq \mathbb{R}^{n_s}$, there exist $n_s+1$ values of $\rvu$, i.e., $\rvu_j$ with $j=0,1,\cdots,n_s$, such that these $n_s$ vectors $\bm{\rvw}(\rvz,\rvu_j)-\bm{\rvw}(\rvz,\rvu_0)$ with $j=1,\cdots,n_s$ are linearly independent, where vector $\bm{\rvw}(\rvz,\rvu_j)$ is defined as follows:
    \begin{equation}
        \bm{\rvw}(\rvz,\rvu)=\left(\frac{\partial q_1(z_{1},\rvu)}{\partial z_{1}},\cdots, \frac{\partial q_i(z_{i},\rvu)}{\partial z_{i}},\cdots \frac{\partial q_{n_s}(z_{n_s},\rvu)}{\partial z_{n_s}}\right),
    \end{equation}
\end{itemize}

By modeling the aforementioned data generation process, $\rvz_s$ is subspace identifiable.
\end{theorem}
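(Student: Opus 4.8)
The plan is to run the standard change-of-variables identifiability argument from nonlinear ICA, tuned so that it isolates only the sub-block $\rvz_s$. First I would note that the true model (with mixing $\rvx=g(\rvz)$) and the fitted model (with $\rvx=\hat g(\hat\rvz)$) induce the same observed conditional $p_{\rvx\mid\rvu}$ for every $\rvu$; hence, since $g$ and $\hat g$ are diffeomorphisms, the map $T:=g^{-1}\circ\hat g$ is a diffeomorphism satisfying $\rvz=T(\hat\rvz)$. Applying the latent change of variables $p_{\hat\rvz\mid\rvu}(\hat\rvz\mid\rvu)=p_{\rvz\mid\rvu}(T(\hat\rvz)\mid\rvu)\,\lvert\det J_T(\hat\rvz)\rvert$, taking logarithms, and invoking A2 for both models, I obtain the master identity
\begin{equation}
\sum_{i=1}^{n}\hat q_i(\hat z_i,\rvu)=\sum_{i=1}^{n} q_i\!\big(T_i(\hat\rvz),\rvu\big)+\log\lvert\det J_T(\hat\rvz)\rvert .
\end{equation}

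Next I would differentiate this identity with respect to a single fitted coordinate $\hat z_b$ lying outside the $\rvz_s$-block ($b>n_s$); on the left only $\partial\hat q_b/\partial\hat z_b$ survives. Evaluating the result at the $n_s+1$ anchor values $\rvu_0,\dots,\rvu_{n_s}$ promised by A3 and subtracting the $\rvu_0$ equation from each $\rvu_j$ equation kills three groups of terms: the $\log\lvert\det J_T\rvert$ term (it carries no $\rvu$-dependence), the summands with $i>n_s$ (the out-of-block latents are $\rvu$-invariant in the generation process, hence in both models), and $\partial\hat q_b/\partial\hat z_b$ itself (same reason). What is left, for every $b>n_s$ and every $j\in\{1,\dots,n_s\}$, is
\begin{equation}
0=\sum_{i=1}^{n_s}\Big[\tfrac{\partial q_i}{\partial z_i}(T_i,\rvu_j)-\tfrac{\partial q_i}{\partial z_i}(T_i,\rvu_0)\Big]\frac{\partial T_i}{\partial\hat z_b}=\big\langle\,\bm{\rvw}(\rvz,\rvu_j)-\bm{\rvw}(\rvz,\rvu_0),\ \nabla_{b}T_s\,\big\rangle,
\end{equation}
writing $\nabla_{b}T_s:=\big(\partial T_1/\partial\hat z_b,\dots,\partial T_{n_s}/\partial\hat z_b\big)$.

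The argument now becomes linear-algebraic. By A3 the $n_s$ vectors $\bm{\rvw}(\rvz,\rvu_j)-\bm{\rvw}(\rvz,\rvu_0)$, $j=1,\dots,n_s$, are linearly independent in $\mathbb{R}^{n_s}$, hence span it, and a vector orthogonal to a spanning set is zero; therefore $\nabla_{b}T_s=0$ for every $b>n_s$, i.e.\ $\partial T_i/\partial\hat z_b=0$ whenever $i\le n_s<b$. Thus $\rvz_s=T_{1:n_s}(\hat\rvz)$ is a function of $\hat\rvz_s$ alone. Since $J_T$ is everywhere invertible and now block lower-triangular, its leading $n_s\times n_s$ block — the Jacobian of $\hat\rvz_s\mapsto\rvz_s$ — is invertible; and $T^{-1}=\hat g^{-1}\circ g$ inherits the block-triangular structure, so $\hat\rvz_s$ is likewise a smooth function of $\rvz_s$ and the two maps are mutually inverse. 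Hence $\rvz_s$ is recovered up to a diffeomorphism, which is precisely the claimed subspace identifiability.

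The step I expect to be the real obstacle is making the cancellations in the differencing argument rigorous: one has to pin down exactly which feature of the generation process of Figure~\ref{fig:data2} forces the out-of-block latents (and their fitted counterparts) to be $\rvu$-invariant, and be explicit that ``modeling the data generation process'' entails that the fitted model also satisfies A2, so that $\hat q_b$ depends on $\hat z_b$ only and the left-hand derivative is clean. A lesser subtlety is upgrading ``$\hat\rvz_s\mapsto\rvz_s$ has invertible Jacobian'' to a genuine bijection, which is exactly what the block-triangularity of $T^{-1}$ is there to provide.
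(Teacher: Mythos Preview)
Your proposal is correct and follows essentially the same route as the paper's proof: match $p_{\rvx\mid\rvu}$, pass to the latent change-of-variables identity, take logs and use A2 to factorize, differentiate in an out-of-block coordinate $\hat z_b$ with $b>n_s$, subtract across the $n_s+1$ anchor values of $\rvu$, and invoke A3 to force $\partial T_i/\partial\hat z_b=0$ for $i\le n_s$. Your treatment of the final step (block-triangularity of $J_T$ forcing invertibility of the $n_s\times n_s$ leading block, and the same structure propagating to $T^{-1}$) is in fact slightly more careful than the paper's, and the concern you flag about why the fitted out-of-block coordinates must be $\rvu$-invariant is exactly the one implicit assumption the paper also leans on without further comment.
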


\begin{proof}
We begin with the matched marginal distribution $p_{\rvx|\rvu}$ to bridge the relation between $\rvz$ and $\hat{\rvz}$. Suppose that $\hat{g}:\mathcal{Z}\rightarrow \mathcal{X}$ is a invertible estimated generating function, we have Equation (\ref{equ:t1_1}).
\begin{equation}
\label{equ:t1_1}
\forall \rvu\in \mathcal{U}, \quad p_{\hat{\rvx}|\rvu}=p_{\rvx|\rvu} \Longleftrightarrow p_{\hat{g}(\hat{\rvz})|\rvu}=p_{g(\rvz)|\rvu}.
\end{equation}
Sequentially, by using the change of variables formula, we can further obtain Equation (\ref{equ:t1_2})
\begin{equation}
\label{equ:t1_2}
    p_{\hat{g}(\hat{\rvz}|\rvu)}=p_{g(\rvz|\rvu)} \Longleftrightarrow  p_{g^{-1}\circ g(\hat{\rvz})|\rvu}|\mJ_{g^{-1}}| = p_{\rvz|\rvu}|\mJ_{g^{-1}}| \Longleftrightarrow p_{h(\hat{\rvz})|\rvu}=p_{\rvz|\rvu},
\end{equation}
where $h:=g^{-1}\circ g$ is the transformation between the ground-true and the estimated latent variables, respectively. $\mJ_{g^{-1}}$ denotes the absolute value of Jacobian matrix determinant of $g^{-1}$. Since we assume that $g$ and $\hat{g}$ are invertible, $|\mJ_{g^{-1}}|\neq 0$ and $h$ is also invertible.

According to A2 (conditional independent assumption), we can have Equation (\ref{equ:t1_3}).
\begin{equation}
\label{equ:t1_3}
    p_{\rvz|\rvu}(\rvz|\rvu)=\prod_{i=1}^{n} p_{z_i|\rvu}(z_i|\rvu); \quad p_{\hat{\rvz}|\rvu}(\hat{\rvz}|\rvu) = \prod_{i=1}^n p_{\hat{z}_i|\rvu}(\hat{z}_i|\rvu).
\end{equation}
For convenience, we take logarithm on both sides of Equation (\ref{equ:t1_3}) and further let $q_i:=\log p_{z_i|\rvu}, \hat{q}_i:=\log p_{\hat{z}_i|\rvu}$. Hence we have:
\begin{equation}
\label{equ:t1_4}
\log p_{\rvz|\rvu}(\rvz|\rvu)=\sum_{i=1}^n q_i(z_i,\rvu);\quad \log p_{\hat{\rvz}|\rvu}=\sum_{i=1}^n \hat{q}_i(\hat{z}_i,\rvu).
\end{equation}

By combining Equation (\ref{equ:t1_4}) and Equation (\ref{equ:t1_2}), we have:
\begin{equation}
\label{equ:t1_5}
    p_{\rvz|\rvu}=p_{h(\hat{\rvz}|\rvu)}\Longleftrightarrow p_{\hat{\rvz}|\rvu}=p_{\rvz|\rvu}|\mJ_{h^{-1}}|\Longleftrightarrow \sum_{i=1}^n q_i({z_i,\rvu}) + \log |\mJ_{h^{-1}}| = \sum_{i=1}^n \hat{q}_i(\hat{z}_i,\rvu),
\end{equation}
where $\mJ_{h^{-1}}$ are the Jacobian matrix of $h^{-1}$.


Sequentially, we take the first-order derivative with $\hat{z}_j$ on Equation (\ref{equ:t1_5}), where $j\in\{n_s+1,\cdots,n\}$, and have
\begin{equation}
\label{equ:app_1_order_der}
    \sum_{i=1}^n\frac{\partial q_i(z_i,\rvu)}{\partial z_i}\cdot \frac{\partial z_i}{\partial \hat{z}_j} + \frac{\partial\log |\mJ_{h^{-1}}|}{\partial \hat{z}_j} = \frac{\partial q_j(\hat{z}_j, \rvu)}{\partial \hat{z}_j}.
\end{equation}

Suppose $\rvu=u_0,u_1,\cdots,u_{n_s}$, we subtract the Equation (\ref{equ:app_1_order_der}) corresponding to $u_k$ with that corresponds to $u_0$, and we have:
\begin{equation}
    \sum_{i=1}^{n} \left(\frac{\partial q_i(z_i,u_k)}{\partial z_i}-\frac{\partial q_i(z_i,u_0)}{\partial z_i}\right)\cdot \frac{\partial z_i}{\partial \hat{z}_j} = \frac{\partial \hat{q}_j(\hat{z}_j,u_k)}{\partial \hat{z}_j} - \frac{\partial \hat{q}_j(\hat{z}_j,u_0)}{\partial \hat{z}_j}.
\end{equation}
Since the distribution of estimated $\hat{\rvz}_j$ does not change across different domains, $\frac{\partial \hat{q}_j(\hat{z}_j,u_k)}{\partial \hat{z}_j} - \frac{\partial \hat{q}_j(\hat{z}_j,u_0)}{\partial \hat{z}_j}=0$. Since $\frac{\partial q_i(z_i,u_k)}{\partial z_i}$ does not change across different domains, $\frac{\partial q_i(z_i,u_k)}{\partial z_i}=\frac{\partial q_i(z_i,u_0)}{\partial z_i}$ for $i\in \{n_s+1,\cdots,n\}$. So we have
\begin{equation}
\sum_{i=1}^{n_s} \left(\frac{\partial q_i(z_i,u_k)}{\partial z_i}-\frac{\partial q_i(z_i,u_0)}{\partial z_i}\right)\cdot \frac{\partial z_i}{\partial \hat{z}_j} = 0.
\end{equation}

Based on the linear independence assumption (A3), the linear system is a $n_s \times n_s$ full-rank system. Therefore, the only solution is $\frac{\partial z_i}{\partial \hat{z}_j}=0$ for $i\in\{1,\cdots,n_s\}$ and $j\in\{n_s+1,\cdots,n\}$.

Since $h(\cdot)$ is smooth over $\mathcal{Z}$, its Jacobian can be formalized as follows
\begin{equation}
\begin{gathered}
    \mJ_{h}=\begin{bmatrix}
    \begin{array}{c|c}
        \textbf{A}:=\frac{\partial \rvz_s}{\partial \hat{\rvz}_s} & \textbf{B}:=\frac{\partial \rvz_s}{\partial \hat{\rvz}_c} \\ \hline
        \textbf{C}:=\frac{\partial \rvz_c}{\partial \hat{\rvz}_s} & \textbf{D}:=\frac{\partial \rvz_c}{\partial \hat{\rvz}_c}.
    \end{array}
    \end{bmatrix}
\end{gathered}
\end{equation}
Note that $\frac{\partial z_i}{\partial \hat{z}_j}=0$ for $i\in\{1,\cdots,n_s\}$ and $j\in\{n_s+1,\cdots,n\}$ means that $\textbf{B}=0$. Since $h(\cdot)$ is invertible, $\mJ_{h}$ is a full-rank matrix. Therefore, for each $z_{s,i}, i\in\{1,\cdots,n_s\}$, there exists a $h_i$ such that  $z_{s,i}=h_i(\hat{\rvz})$.
\end{proof}

\subsection{Proof of Corollary1.1}
\label{corollary}
\begin{corollary}
We follow the data generation in Section 3.1, and make the following assumptions which are similar to A1-A3:
\item A4 (\underline{Smooth and Positive Density}): The probability density function of latent variables is smooth and positive, i.e., $p_{\rvz|\rvu, \rvy}>0$ over $\mathcal{Z}$, $\mathcal{U}$, and $\mathcal{Y}$.
\item A5 (\underline{Conditional independent}): Conditioned on $\rvu$ and $\rvy$, each $z_i$ is independent of any other $z_j$ for $i,j \in \{1,\cdots,n\}, i\neq j$, i.e. $\log p_{\rvz|\rvu,\rvy}(\rvz|\rvu,\rvy)=\sum_i^n q_i(z_{i},\rvu,\rvy)$ where $q_i(z_{i},\rvu,\rvy)$ is the log density of the conditional distribution, i.e., $q_i:\log p_{z_{i}|\rvu,\rvy }$.
\item A6 (\underline{Linear independence}): For any $\rvz\in\mathcal{Z}\subseteq\mathbb{R}^n$, there exists $n_1+n_2+n_3+1$ combination of $(\rvu,\rvy)$, i.e. $j=1,\cdots, U$ and $ c=1, \cdots, C$ and $ U\times C + 1=n_1+n_2+n_3$, where $U$ and $C$ denote the number of source domains and the number of labels. such that these $n'=n_1+n_2+n_3$ vectors $\bm{\rvw}(\rvz,\rvu_j,\rvy_c)-\bm{\rvw}(\rvz,\rvu_0,\rvy_0)$ are linearly independent, where $\bm{\rvw}(\rvz,\rvu_j,\rvy_c)$ is defined as follows:
\begin{equation}
    \bm{\rvw}(\rvz,\rvu_j,\rvy_c)=\left(\frac{\partial q_1(z_{1},\rvu,\rvy)}{\partial z_{1}},\cdots, \frac{\partial q_i(z_{i},\rvu,\rvy)}{\partial z_{i}},\cdots \frac{\partial q_{n'}(z_{n'},\rvu,\rvy)}{\partial z_{n'}}\right).
\end{equation}
By modeling the aforementioned data generation process, $\rvz_2$ is subspace identifiable, and $\rvz_1, \rvz_3$ can be reconstructed from $\hat{\rvz}_1, \hat{\rvz}_2$ and $\hat{\rvz}_2, \hat{\rvz}_3$, respectively.
\end{corollary}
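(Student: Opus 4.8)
The plan is to reduce the corollary to two invocations of the mechanism behind Theorem~\ref{the1}, each time pitting a different pair of ``changing'' latent blocks against the ``invariant'' ones, and then to intersect the resulting zero patterns in the Jacobian of $h$. First I would redo the argument leading to \eqref{equ:t1_5} with the conditioning variable $(\rvu,\rvy)$ in place of $\rvu$: matching $p_{\hat{\rvx}|\rvu,\rvy}=p_{\rvx|\rvu,\rvy}$ and applying the change of variables formula produces an invertible $h:=g^{-1}\circ\hat g$ with $p_{h(\hat{\rvz})|\rvu,\rvy}=p_{\rvz|\rvu,\rvy}$, and A5 turns this into the log-density identity $\sum_{i=1}^{n} q_i(z_i,\rvu,\rvy)+\log|\mJ_{h^{-1}}|=\sum_{i=1}^{n}\hat q_i(\hat z_i,\rvu,\rvy)$. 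From the generation process of Section~3.1, the coordinates of $\rvz$ that vary with $\rvu$ are exactly those of $(\rvz_1,\rvz_2)$ and the ones that vary with $\rvy$ are exactly those of $(\rvz_2,\rvz_3)$, while $\rvz_4$ varies with neither; correspondingly the estimator is constrained so that $\hat q_j$ does not depend on $\rvu$ for coordinates $j$ of $(\hat{\rvz}_3,\hat{\rvz}_4)$ and does not depend on $\rvy$ for coordinates $j$ of $(\hat{\rvz}_1,\hat{\rvz}_4)$.

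Next I would run the differentiation step of Theorem~\ref{the1} twice. (i) Freeze $\rvy=\rvy_0$ and differentiate the log-density identity with respect to $\hat z_j$ for each coordinate $j$ of $(\hat{\rvz}_3,\hat{\rvz}_4)$, obtaining the analogue of \eqref{equ:app_1_order_der}; subtracting the equation at $\rvu_k$ from the one at $\rvu_0$ cancels $\log|\mJ_{h^{-1}}|$ (it is $\rvu$-free), cancels the $\hat q_j$ terms (by the $\rvu$-invariance of $\hat{\rvz}_3,\hat{\rvz}_4$), and cancels the $q_i$ terms for $i$ ranging over $(\rvz_3,\rvz_4)$ (by the $\rvu$-invariance of $\rvz_3,\rvz_4$), leaving a homogeneous square system in the unknowns $\partial z_i/\partial \hat z_j$ with $i$ over $(\rvz_1,\rvz_2)$; by the $\rvu$-restricted form of A6 this $(n_1+n_2)\times(n_1+n_2)$ system is full rank, so $\partial(\rvz_1,\rvz_2)/\partial(\hat{\rvz}_3,\hat{\rvz}_4)=0$. (ii) Symmetrically, freezing $\rvu=\rvu_0$ and varying $\rvy$ yields $\partial(\rvz_2,\rvz_3)/\partial(\hat{\rvz}_1,\hat{\rvz}_4)=0$.

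Finally I would read off the block structure of $\mJ_h$ with rows split into $\rvz_1,\rvz_2,\rvz_3,\rvz_4$ and columns into $\hat{\rvz}_1,\hat{\rvz}_2,\hat{\rvz}_3,\hat{\rvz}_4$: step (i) kills the $\hat{\rvz}_3$- and $\hat{\rvz}_4$-columns on the $\rvz_1$- and $\rvz_2$-rows, and step (ii) kills the $\hat{\rvz}_1$- and $\hat{\rvz}_4$-columns on the $\rvz_2$- and $\rvz_3$-rows. Intersecting, the $\rvz_2$-rows survive only in the $\hat{\rvz}_2$-columns, the $\rvz_1$-rows only in the $\hat{\rvz}_1,\hat{\rvz}_2$-columns, and the $\rvz_3$-rows only in the $\hat{\rvz}_2,\hat{\rvz}_3$-columns; since $\mJ_h$ is invertible each surviving diagonal sub-block is full rank, which upgrades these to $\rvz_2=h(\hat{\rvz}_2)$ (subspace identifiability of $\rvz_2$), $\rvz_1=h(\hat{\rvz}_1,\hat{\rvz}_2)$, and $\rvz_3=h(\hat{\rvz}_2,\hat{\rvz}_3)$, which is the claim.

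The step I expect to be the main obstacle is the bookkeeping that legitimizes each reduction rather than the linear algebra itself: one must check that restricting A4--A6 to the fixed-$\rvy$ (resp. fixed-$\rvu$) slice still supplies enough distinct values of the free variable and a full-rank sub-Jacobian of the right size---this is exactly where the counting condition ``$U\times C+1=n_1+n_2+n_3$'' and its one-variable analogues enter---and that the model genuinely enforces the stated invariances of $(\hat{\rvz}_3,\hat{\rvz}_4)$ in $\rvu$ and of $(\hat{\rvz}_1,\hat{\rvz}_4)$ in $\rvy$, since those cancellations are what make the $\hat q_j$ terms disappear. Everything else is the blockwise repetition of the steps already carried out for Theorem~\ref{the1}.
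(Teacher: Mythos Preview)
Your proposal is correct and follows essentially the same route as the paper: derive the log-density identity conditioned on $(\rvu,\rvy)$, then run the Theorem~\ref{the1} differentiation once with $\rvy$ frozen and once with $\rvu$ frozen to obtain exactly the zero blocks you list, and read off the conclusion from the resulting block-triangular structure of $\mJ_h$. The paper additionally performs a preliminary pass varying both $\rvu$ and $\rvy$ to zero out the $\hat{\rvz}_4$-column on the $(\rvz_1,\rvz_2,\rvz_3)$-rows, but this is redundant since your two passes already cover those entries; your caution about whether A6 restricted to the fixed-$\rvy$ or fixed-$\rvu$ slice still yields enough independent equations is well placed, as the paper simply asserts full rank at those steps without spelling out the sub-assumption.
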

\begin{proof}
We begin with the match marginal distribution $p_{\rvx|\rvu,\rvy}$ to bridge the relation between $\rvz$ and $\hat{\rvz}$. Suppose that $\hat{g}:\mathcal{Z}\rightarrow\mathcal{X}$ is an invertible estimated generating function, we have Equation (\ref{equ:t2_1}).
\begin{equation}
\label{equ:t2_1}
   \forall \rvu\in \mathcal{U}, \rvy \in \mathcal{Y}, p_{\hat{\rvx}|\rvu,\rvy}=p_{\rvx|\rvu,\rvy} \Longleftrightarrow p_{\hat{g}(\hat{\rvz})|\rvu,\rvy}=p_{g(\rvz)|\rvu,\rvy}.
\end{equation}
Sequentially, by using the change of variables formula, we can further obtain Equation(\ref{equ:t2_2}).
\begin{equation}
\label{equ:t2_2}
    p_{\hat{g}(\hat{\rvz})|\rvu,\rvy}=p_{g(\rvz)|\rvu,\rvy} \Longleftrightarrow  p_{g^{-1}\circ g(\hat{\rvz})|\rvu,\rvy}|\mJ_{g^{-1}}| = p_{\rvz|\rvu,\rvy}|\mJ_{g^{-1}}| \Longleftrightarrow p_{h(\hat{\rvz})|\rvu,\rvy}=p_{\rvz|\rvu,\rvy},
\end{equation}
where $h:=g^{-1}\circ g$ is the transformation between the ground-true and the estimated latent variables. $\mJ_{g^{-1}}$ denotes the absolute value of Jacobian matrix determinant of $g^{-1}$. Since we assume that $g$ and $\hat{g}$ are invertible, $|\mJ_{g^{-1}}|\neq 0$ and $h$ is also invertible.

According to A5 (conditional independent assumption), we can have Equation (\ref{equ:t2_3}).
\begin{equation}
\label{equ:t2_3}
    p_{\rvz|\rvu,\rvy}(\rvz|\rvu,\rvy)=\prod_{i=1}^{n} p_{z_i|\rvu,\rvy}(z_i|\rvu,\rvy); \quad p_{\hat{\rvz}|\rvu,\rvy}(\hat{\rvz}|\rvu,\rvy) = \prod_{i=1}^n p_{\hat{z}_i|\rvu,\rvy}(\hat{z}_i|\rvu,\rvy).
\end{equation}
For convenience, we take logarithms on both sides of the Equation(\ref{equ:t2_3}) and further let $q_i:=\log p_{z_i|\rvu,\rvy}, \hat{q}_i:=\log p_{\hat{z}_i|\rvu,\rvy}$. Hence we have:
\begin{equation}
\label{equ:t2_4}
\log p_{\rvz|\rvu,\rvy}(\rvz|\rvu,\rvy)=\sum_{i=1}^n q_i(z_i,\rvu,\rvy);\quad \log p_{\hat{\rvz,\rvy}|\rvu}=\sum_{i=1}^n \hat{q}_i(\hat{z}_i,\rvu,\rvy).
\end{equation}

By combining Equation (\ref{equ:t2_4}) and Equation (\ref{equ:t2_2}), we have:
\begin{equation}
\label{equ:t2_5}
    p_{\rvz|\rvu,\rvy}=p_{h(\hat{\rvz}|\rvu,\rvy)}\Longleftrightarrow p_{\hat{\rvz}|\rvu,\rvy}=p_{\rvz|\rvu,\rvy}|\mJ_{h^{-1}}|\Longleftrightarrow \sum_{i=1}^n q_i({z_i,\rvu,\rvy}) + \log |\mJ_{h^{-1}}| = \sum_{i=1}^n \hat{q}_i(\hat{z}_i,\rvu,\rvy),
\end{equation}
where $\mJ_{h^{-1}}$ are the Jacobian matrix of $h^{-1}$.

Sequentially, we take the first-order derivative with $\hat{z}_j$ on Equation (\ref{equ:t2_5}), where $j\in\{n_1+n_2+n_3+1,\cdots,n\}$, and have
\begin{equation}
\label{equ:t2_app_1_order_der}
    \sum_{i=1}^n\frac{\partial q_i(z_i,\rvu,\rvy)}{\partial z_i}\cdot \frac{\partial z_i}{\partial \hat{z}_j} + \frac{\partial\log |\mJ_{h^{-1}}|}{\partial \hat{z}_j} = \frac{\partial q_j(\hat{z}_j, \rvu,\rvy)}{\partial \hat{z}_j}.
\end{equation}

According to A6, there exist $n_1+n_2+n_3+1$ conbinations of $(\rvu,\rvy)$, so we subtract the Equation (\ref{equ:t2_app_1_order_der}) to $\rvu_k, \rvy_l$ with that corresponds to $\rvu_0,\rvy_0$, and we have:
\begin{equation}
    \sum_{i=1}^{n} \left(\frac{\partial q_i(z_i,u_k,\rvy_l)}{\partial z_i}-\frac{\partial q_i(z_i,u_0,\rvy_0)}{\partial z_i}\right)\cdot \frac{\partial z_i}{\partial \hat{z}_j} = \frac{\partial \hat{q}_j(\hat{z}_j,u_k,\rvy_l)}{\partial \hat{z}_j} - \frac{\partial \hat{q}_j(\hat{z}_j,u_0,\rvy_0)}{\partial \hat{z}_j}.
\end{equation}
Since the distribution of estimated $\hat{\rvz}_j$ does not change across different domains and labels, $\frac{\partial \hat{q}_j(\hat{z}_j,u_k,\rvy_l)}{\partial \hat{z}_j} - \frac{\partial \hat{q}_j(\hat{z}_j,u_0,\rvy_0)}{\partial \hat{z}_j}=0$. Since $\frac{\partial q_i(z_i,u_k,\rvy_l)}{\partial z_i}$ does not change across different domains, $\frac{\partial q_i(z_i,u_k,\rvy_l)}{\partial z_i}=\frac{\partial q_i(z_i,u_0,\rvy_0)}{\partial z_i}$ for $i\in \{1,\cdots, n_1+n_2+n_3\}$. So we have:
\begin{equation}
\sum_{i=1}^{n_1+n_2+n_3} \left(\frac{\partial q_i(z_i,u_k,\rvy_l)}{\partial z_i}-\frac{\partial q_i(z_i,u_0,\rvy_0)}{\partial z_i}\right)\cdot \frac{\partial z_i}{\partial \hat{z}_j} = 0.
\end{equation}

Based on the linear independence assumption (A3), the linear system is a $n \times n$ full-rank system. Therefore, the only solution is $\frac{z_i}{\hat{z}_j}=0$ for $i\in\{1,\cdots,n_1+n_2+n_3\}$ and $j\in\{n_1+n_2+n_3+1,\cdots,n\}$.

Since $h(\cdot)$ is smooth over $\mathcal{Z}$, its Jacobian can be formalized as follows
\begin{equation}
\label{equ:jacobian}
\begin{gathered}
    \mJ_{h}=\begin{bmatrix}
    \begin{array}{c|c|c|c}
        \mJ_{h}^{1,1} & \mJ_{h}^{1,2}& \mJ_{h}^{1,3}& \mJ_{h}^{1,4} \\ \cline{1-4}
        \mJ_{h}^{2,1} & \mJ_{h}^{2,2}& \mJ_{h}^{2,3}& \mJ_{h}^{2,4} \\ \cline{1-4}
        \mJ_{h}^{3,1} & \mJ_{h}^{3,2}& \mJ_{h}^{3,3}& \mJ_{h}^{3,4} \\ \cline{1-4}
        \mJ_{h}^{4,1} & \mJ_{h}^{4,2}& \mJ_{h}^{4,3}& \mJ_{h}^{4,4} \\
    \end{array}
    \end{bmatrix}
\end{gathered}
\end{equation}
where $\mJ^{ij}:=\frac{\partial \rvz_i}{\partial\hat{\rvz}_j}$ and $i,j\in\{1,2,3,4\}$. 

Since $\frac{z_i}{\hat{z}_j}=0$ for $i\in\{1,\cdots,n_1+n_2+n_3\}$ and $j\in\{n_1+n_2+n_3+1,\cdots,n\}$, $\mJ_{h}^{3,4}=0,\mJ_{h}^{2,4}=0,\mJ_{h}^{1,4}=0$.

we take the first-order derivative with $\hat{z}_j$ on Equation (\ref{equ:t2_5}), where $j\in\{n_1+n_2+1,\cdots,n\}$, and have
\begin{equation}
\label{equ:t2_app_1_order_der2}
    \sum_{i=1}^n\frac{\partial q_i(z_i,\rvu,\rvy)}{\partial z_i}\cdot \frac{\partial z_i}{\partial \hat{z}_j} + \frac{\partial\log |\mJ_{h^{-1}}|}{\partial \hat{z}_j} = \frac{\partial q_j(\hat{z}_j, \rvu,\rvy)}{\partial \hat{z}_j}.
\end{equation}
Then we fix the value of $\rvy$ be $\rvy_0$, so there exist $U$ combinations of $(\rvu,\rvy_0)$. We subtract the Equation (\ref{equ:t2_app_1_order_der2}) corresponds to $(\rvu_k,\rvy_0)$ with that corresponds to $(\rvu_0,\rvy_0)$ and have:
\begin{equation}
    \sum_{i=1}^{n} \left(\frac{\partial q_i(z_i,u_k,\rvy_0)}{\partial z_i}-\frac{\partial q_i(z_i,u_0,\rvy_0)}{\partial z_i}\right)\cdot \frac{\partial z_i}{\partial \hat{z}_j} = \frac{\partial \hat{q}_j(\hat{z}_j,u_k,\rvy_0)}{\partial \hat{z}_j} - \frac{\partial \hat{q}_j(\hat{z}_j,u_0,\rvy_0)}{\partial \hat{z}_j}.
\end{equation}
Since the distribution of estimated $\hat{\rvz}_j$ does not change across different domains, $\frac{\partial \hat{q}_j(\hat{z}_j,u_k,\rvy_0)}{\partial \hat{z}_j} - \frac{\partial \hat{q}_j(\hat{z}_j,u_0,\rvy_0)}{\partial \hat{z}_j}=0$. Since $\frac{\partial q_i(z_i,u_k,\rvy_0)}{\partial z_i}$ does not change across different domains, $\frac{\partial q_i(z_i,u_k,\rvy_0)}{\partial z_i}=\frac{\partial q_i(z_i,u_0,\rvy_0)}{\partial z_i}$ for $i\in \{1,\cdots,n_1+n_2\}$. So we have:
\begin{equation}
\sum_{i=1}^{n_1+n_2} \left(\frac{\partial q_i(z_i,u_k,\rvy_0)}{\partial z_i}-\frac{\partial q_i(z_i,u_0,\rvy_0)}{\partial z_i}\right)\cdot \frac{\partial z_i}{\partial \hat{z}_j} = 0.
\end{equation}
Based on the linear independence assumption (A3), the linear system is a $n \times n$ full-rank system. Therefore, the only solution is $\frac{z_i}{\hat{z}_j}=0$ for $i\in\{1,\cdots,n_1+n_2\}$ and $j\in\{n_1+n_2+1,\cdots,n\}$. Combining Equation (\ref{equ:jacobian}), we can find that $\mJ_h^{1,3}=0,\mJ_h^{1,4}=0,\mJ_h^{2,3}=0$, and $\mJ_h^{2,4}=0$.

Similarly, we let $j\in\{1,\cdots,n_1\}\bigcup\{n_1+n_2+n_3+1,\cdots,n\}$ and have:
\begin{equation}
\label{equ:t2_app_1_order_der3}
    \sum_{i=1}^n\frac{\partial q_i(z_i,\rvu,\rvy)}{\partial z_i}\cdot \frac{\partial z_i}{\partial \hat{z}_j} + \frac{\partial\log |\mJ_{h^{-1}}|}{\partial \hat{z}_j} = \frac{\partial q_j(\hat{z}_j, \rvu,\rvy)}{\partial \hat{z}_j}.
\end{equation}

Then fix the value of $\rvu$ be $\rvu_0$, so there exist $C$ combinations of $(\rvu_0,\rvy_l)$. We subtract the Equation (\ref{equ:t2_app_1_order_der3}) corresponds to $(\rvu_0,\rvy_l)$ with that corresponds to $(\rvu_0,\rvy_0)$ and have:
\begin{equation}
    \sum_{i=n_1+1}^{n_1+n_2+n_3} \left(\frac{\partial q_i(z_i,u_0,\rvy_l)}{\partial z_i}-\frac{\partial q_i(z_i,u_0,\rvy_0)}{\partial z_i}\right)\cdot \frac{\partial z_i}{\partial \hat{z}_j} = \frac{\partial \hat{q}_j(\hat{z}_j,u_0,\rvy_l)}{\partial \hat{z}_j} - \frac{\partial \hat{q}_j(\hat{z}_j,u_0,\rvy_0)}{\partial \hat{z}_j}.
\end{equation}
Based on the linear independence assumption (A3), the linear system is a $n \times n$ full-rank system. Therefore, the only solution is $\frac{z_i}{\hat{z}_j}=0$ for $i\in\{n_1+1,\cdots,n_1+n_2+n_3\}$ and $j\in\{1,\cdots,n_1\}\bigcup\{n_1+n_2+n_3+1,\cdots,n\}$. Combining Equation (\ref{equ:jacobian}), we can find that $\mJ_h^{2,1}=0,\mJ_h^{2,4}=0,\mJ_h^{3,1}=0$, and $\mJ_h^{3,4}=0$.

In summary, Equation (\ref{equ:jacobian}) can be written as follows
\begin{equation}
\label{equ:jacobian2}
\begin{gathered}
    \mJ_{h}=\begin{bmatrix}
    \begin{array}{c|c|c|c}
        \mJ_{h}^{1,1} & \mJ_{h}^{1,2}& \mJ_{h}^{1,3}=0& \mJ_{h}^{1,4}=0 \\ \cline{1-4}
        \mJ_{h}^{2,1}=0 & \mJ_{h}^{2,2}& \mJ_{h}^{2,3}=0& \mJ_{h}^{2,4}=0 \\ \cline{1-4}
        \mJ_{h}^{3,1}=0 & \mJ_{h}^{3,2}& \mJ_{h}^{3,3}& \mJ_{h}^{3,4}=0 \\ \cline{1-4}
        \mJ_{h}^{4,1} & \mJ_{h}^{4,2}& \mJ_{h}^{4,3}& \mJ_{h}^{4,4} \\
    \end{array}
    \end{bmatrix}.
\end{gathered}
\end{equation}
Since  $h(\cdot)$ is invertible, $\mJ_{h}$ is a full-rank matrix. Therefore, for each $z_{2,i}, i\in\{n_1+1,\cdots,n_1+n_2\}$, there exists a $h_{2,i}$ such that  $z_{2,i}=h_i(\hat{\rvz}_2)$. Moreover, for each $z_{1,i}, i\in\{1,\cdots,n_1+1\}$, there exists a $h_{1,i}$ such that  $z_{1,i}=h_{1,i}(\hat{\rvz}_1,\hat{\rvz}_2)$. And for each $z_{3,i}, i\in\{n_1+n_2+1,\cdots,n_1+n_2+n_3\}$, there exists a $h_{3,i}$ such that  $z_{3,i}=h_{3,i}(\hat{\rvz}_2,\hat{\rvz}_3)$.
\end{proof}

\subsection{Proof of Blockwise Identification}
\label{app:proof_block}
\begin{lemma}
\label{lamma}\cite{kong2022partial}
Following the data generation process in Section 2.1 and the assumptions A4-A6 in Theorem \ref{the1}, we further make the following assumption:
\begin{itemize}[leftmargin=*,itemsep=-3pt]
    \item A7 (\underline{Domain Variability}: For any set $A_{\rvz} \subseteq \mathcal{Z}$) with the following two properties: 1) $A_{\rvz}$ has nonzero probability measure, i.e. $\mathbb{P}[\rvz\in A_{\rvz}|\{\rvu=\rvu', \rvy=\rvy'\}]>0$ for any $\rvu'\in \mathcal{U}$ and $\rvy'\in \mathcal{Y}$. 2) $A_{\rvz}$ cannot be expressed as $B_{\rvz_4} \times \mathcal{Z}_1\times \mathcal{Z}_2\times \mathcal{Z}_3$ for any $B_{\rvz_4}\subset \mathcal{Z}_4$.
\end{itemize}
$\exists \rvu_1, \rvu_2 \in \mathcal{U}$ and $\rvy_1,\rvy_2 \in \mathcal{Y}$, such that $\int_{\rvz\in \mathcal{A}_{\rvz}}p_{\rvz|\rvu_1,\rvy_1}d\rvz \neq \int_{\rvz\in \mathcal{A}_{\rvz}}p_{\rvz|\rvu_2,\rvy_2}d\rvz$. By modeling the data generation process in Section 2.1, the $\rvz_4$ is block-wise identifiable. 
\end{lemma}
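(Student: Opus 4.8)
The plan is to carry out the block-identifiability argument of \cite{kong2022partial}, instantiated for the four-block latent structure of Section 2.1. As in the proof of Theorem~\ref{the1}, I would first pass from matched observations to matched latents: since $\hat g$ is invertible and $p_{\hat\rvx|\rvu,\rvy}=p_{\rvx|\rvu,\rvy}$ for every $(\rvu,\rvy)\in\mathcal{U}\times\mathcal{Y}$, the change-of-variables formula gives $p_{h(\hat\rvz)|\rvu,\rvy}=p_{\rvz|\rvu,\rvy}$ with $h:=g^{-1}\circ\hat g$ a diffeomorphism on $\mathcal{Z}$, so that for every measurable $A\subseteq\mathcal{Z}$ one has $\mathbb{P}[\rvz\in A\mid\rvu,\rvy]=\mathbb{P}[\hat\rvz\in h^{-1}(A)\mid\rvu,\rvy]$ for all $(\rvu,\rvy)$. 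The target is to produce a diffeomorphism $\bar h_4$ with $\rvz_4=\bar h_4(\hat\rvz_4)$, i.e.\ $\hat\rvz_4$ recovers exactly the information carried by $\rvz_4$.

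The core step combines A7 with the fact that, since we model the generation process of Section 2.1, the estimated component $\hat\rvz_4$---like the true content $\rvz_4$---is built to have a distribution that does not depend on $(\rvu,\rvy)$. Fix an arbitrary measurable $\hat B_4\subseteq\mathcal{Z}_4$ and form the cylinder $\hat A:=\hat B_4\times\mathcal{Z}_1\times\mathcal{Z}_2\times\mathcal{Z}_3$; then $\mathbb{P}[\hat\rvz\in\hat A\mid\rvu,\rvy]=\mathbb{P}[\hat\rvz_4\in\hat B_4]$ is the same for all $(\rvu,\rvy)$. Setting $A:=h(\hat A)$, the coupling identity above makes $\mathbb{P}[\rvz\in A\mid\rvu,\rvy]$ likewise constant in $(\rvu,\rvy)$, and $A$ has positive measure under every $(\rvu,\rvy)$ by A4. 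The contrapositive of A7 then forces $A=B_{\rvz_4}\times\mathcal{Z}_1\times\mathcal{Z}_2\times\mathcal{Z}_3$ for some $B_{\rvz_4}\subseteq\mathcal{Z}_4$; pushing this back through $\rvz=h(\hat\rvz)$ shows that the event $\{\hat\rvz_4\in\hat B_4\}$ coincides, up to a null set, with $\{\rvz_4\in B_{\rvz_4}\}$.

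Because $\hat B_4$ was arbitrary, every event generated by $\hat\rvz_4$ is (modulo null sets) an event generated by $\rvz_4$, so the Doob--Dynkin lemma yields $\hat\rvz_4=\phi(\rvz_4)$ for some measurable $\phi$. For the reverse direction I would either run the symmetric argument with $h^{-1}$ and the $(\rvu,\rvy)$-invariance of $p_{\rvz_4|\rvu,\rvy}$, or---more directly---use the Jacobian structure from Corollary~1.1, where $\mJ_h^{1,4}=\mJ_h^{2,4}=\mJ_h^{3,4}=0$: this makes $(\hat\rvz_1,\hat\rvz_2,\hat\rvz_3)\mapsto(\rvz_1,\rvz_2,\rvz_3)$ a diffeomorphism and gives $\rvz_4=h_4(\hat\rvz_1,\hat\rvz_2,\hat\rvz_3,\hat\rvz_4)$ with $(\hat\rvz_1,\hat\rvz_2,\hat\rvz_3)$ ranging over its entire domain for each fixed $\rvz_4$ (again by A4). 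Substituting $\hat\rvz_4=\phi(\rvz_4)$ and varying $(\hat\rvz_1,\hat\rvz_2,\hat\rvz_3)$ forces $h_4$ to be independent of $(\hat\rvz_1,\hat\rvz_2,\hat\rvz_3)$, hence $\rvz_4=\bar h_4(\hat\rvz_4)$ with $\bar h_4$ smooth and invertible because $h$ is; this is the asserted block-wise identifiability of $\rvz_4$.

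I expect the main obstacle to be the measure-theoretic bookkeeping in the middle paragraphs: verifying that the cylinders $\hat A$ and their images $A=h(\hat A)$ actually meet both hypotheses of A7---in particular that $\mathbb{P}[\rvz\in A\mid\rvu,\rvy]>0$ for \emph{every} $(\rvu,\rvy)$, where A4 is essential---and then upgrading the family of a.e.\ set identifications $\{\hat\rvz_4\in\hat B_4\}=\{\rvz_4\in B_{\rvz_4}\}$ to the pointwise, smooth relation $\rvz_4=\bar h_4(\hat\rvz_4)$, handling the measure-zero exceptions via continuity of $h$ and positivity of the density. The change-of-variables reduction and the invertibility/dimension-counting steps are routine once Theorem~\ref{the1} and Corollary~1.1 are available.
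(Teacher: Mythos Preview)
Your plan follows the same high-level scheme as the paper (both inherited from \cite{kong2022partial}): pass from matched observations to matched latents via the diffeomorphism $h$, use the $(\rvu,\rvy)$-invariance of the estimated block $\hat\rvz_4$ together with A7 to show that $\hat\rvz_4$ is a function of $\rvz_4$ alone, and then appeal to Corollary~1.1 and the invertibility of $h$ for the bijection between $\rvz_4$ and $\hat\rvz_4$. The packaging of the A7 step differs: you apply the contrapositive of A7 directly to the image $A=h(\hat B_4\times\mathcal{Z}_1\times\mathcal{Z}_2\times\mathcal{Z}_3)$ and then invoke Doob--Dynkin, whereas the paper works by contradiction---it supposes the preimage $\overline h_4^{-1}(\mathcal{B}_r(\rvz_4))$ is not of product form, splits the invariance integral into a product part $T_1$ (which vanishes by Fubini) and a non-product part $T_2$, and applies A7 to $T_2$. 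Your route is more streamlined; the paper's is more explicitly topological (open balls, continuity of $\overline h_4$) and thereby sidesteps the null-set bookkeeping you yourself flag as the main obstacle.

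There is a small circularity in your invertibility step. Corollary~1.1 gives $\mJ_h^{i,4}=0$ for $i=1,2,3$, so $(\hat\rvz_1,\hat\rvz_2,\hat\rvz_3)\mapsto(\rvz_1,\rvz_2,\rvz_3)$ is well defined with full-rank Jacobian, but that alone does not make it a \emph{global} diffeomorphism; and the claim that ``varying $(\hat\rvz_1,\hat\rvz_2,\hat\rvz_3)$ forces $h_4$ to be independent of them'' presupposes that $\phi$ is injective, which is what you want to conclude. The paper closes this gap differently: the forward step yields $\mJ_{\overline h}^{4,j}=0$ for $j=1,2,3$, and Corollary~1.1 (carried to $\overline h=h^{-1}$ via the block-triangular inverse) yields $\mJ_{\overline h}^{i,4}=0$ for $i=1,2,3$; together $\mJ_{\overline h}$ is block-diagonal, and then the global bijectivity of $\overline h$ immediately forces the $\rvz_4$-block map $\phi:\mathcal{Z}_4\to\mathcal{Z}_4$ to be a global bijection. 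Your option (a)---a symmetric A7 argument on the estimated side---would require an analogue of A7 for the learned model, which is not assumed.
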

\begin{proof}
We divide the proof into four steps for better understanding.

In Step 1, we leverage the properties of the data generation process and the marginal distribution matching condition to express the marginal invariance with the indeterminacy transformation $\overline{h}:\mathcal{Z}\rightarrow\mathcal{Z}$ between the estimated and the ground-truth latent variables. The introduction of $\overline{h}(\cdot)$ allows us to formalize the block-identifiability condition.

In Step 2 and Step 3, we show that the estimated $\hat{\rvz}_4$ does not depend on the ground-truth changing variables, i.e., $\rvz_1,\rvz_2,\rvz_3$, that is, $\overline{h}_4(\rvz)$ does not depend on the input $\{\rvz_1,\rvz_2,\rvz_3\}$. To this end, in Step 2, we derive its equivalent statements which can ease the rest of the proof and avert technical issues (e.g. sets of zero probability measures). In Step 3, we prove the equivalent statement by contradiction. Specifically, we show that if $\hat{\rvz}_4$ depends of $\rvz_1,\rvz_2,\rvz_3$, the invariance derived in Step 1 would break.

In Step 4, we use the conclusion in Step 3, the smooth and bijective properties of $h(\cdot)$, and the conclusion in Corollary 1.1, to show the invertibility of the indeterminacy function between the ground-truth $\rvz_4$ and estimated $\hat{\rvz}_4$, i.e. the mapping $\hat{\rvz}_4=\overline{h}_4(\rvz_4)$ being invertible.

\textbf{Step 1.} As the data generation process in Section 2.1 establishes the independence between the generation process $\hat{\rvz}_4 \sim p_{\hat{\rvz}_4}$ and $\rvu$ it follows that for any $A_{\rvz_4} \subseteq \mathcal{Z}_4$, we let $n_s=n_1+n_2+n_3$, then we have:
\begin{equation}
\label{equ:th3_1}
\begin{split}
    \forall \rvu_1,\rvu_2 \in \mathcal{U}, \rvy_1, \rvy_2 \in \mathcal{Y}\quad\quad\quad\quad\quad\quad\quad&\\ 
    \mathbb{P}\left[\{\hat{g}^{-1}_{n_s:n}(\hat{\rvx})\in A_{\rvz_4}\}|\{\rvu=\rvu_1, \rvy=\rvy_1\}\right]&=\mathbb{P}\left[\{\hat{g}^{-1}_{n_s:n}(\hat{\rvx})\in A_{\rvz_4}\}|\{\rvu=\rvu_2,\rvy=\rvy_2\}\right]\\ &\Longleftrightarrow\\ \forall \rvu_1,\rvu_2 \in \mathcal{U}, \rvy_1, \rvy_2 \in \mathcal{Y}\quad\quad\quad\quad\quad\quad\quad&\\ 
\mathbb{P}\left[\hat{x}\in(\hat{g}_{n_s:n}^{-1})^{-1}(A_{\rvz_4})|\{\rvu=\rvu_1,\rvy=\rvy_1\}\right]&=\mathbb{P}\left[\hat{x}\in(\hat{g}_{n_s:n}^{-1})^{-1}(A_{\rvz_4})|\{\rvu=\rvu_2,\rvy=\rvy_2\}\right],
\end{split}
\end{equation}
where $\hat{g}^{-1}_{n_s:n}:\mathcal{X}\rightarrow\mathcal{Z}_4$ denotes the estimated transformation from the observation to the $\rvz_4$ latent variables; and $(\hat{g}^{-1}_{n_s:n})^{-1}(A_{\rvz_4})\subseteq \mathcal{X}$ is the pre-image set of $A_{\rvz_4}$, that is , the set of estimated observations $\hat{\rvx}$ originating from $\rvz_4$ in $A_{\rvz_4}$.

Because of the matching observation distributions between the estimated model and the true model, the relation in the Equation (\ref{equ:th3_1}) can be extended to observation $\rvx$ from the true generating process, i.e.,
\begin{equation}
\label{equ:th3_2}
\begin{split}
    \mathbb{P}\left[\{\rvx\in(\hat{g}^{-1}_{n_s:n})^{-1}(A_{\rvz_4})\}|\{\rvu=\rvu_1,\rvy=\rvy_1\}\right]&=\mathbb{P}\left[\{\rvx\in(\hat{g}^{-1}_{n_s:n})^{-1}(A_{\rvz_4})\}|\{\rvu=\rvu_2,\rvy=\rvy_2\}\right]\\ &\Longleftrightarrow\\ \mathbb{P}\left[\{\hat{g}_{n_s:n}^{-1}(\rvx)\in A_{\rvz_4}\}|\rvu=\rvu_1,\rvy=\rvy_1\right]&=\mathbb{P}\left[\{\hat{g}_{n_s:n}^{-1}(\rvx)\in A_{\rvz_4}\}|\rvu=\rvu_2,\rvy=\rvy_2\right].
\end{split}
\end{equation}
Since $g$ and $\hat{g}$ are smooth and injective, there exists a smooth and injective $\overline{h}=\hat{g}^{-1}\circ g:\mathcal{Z}\rightarrow\mathcal{Z}$. We note that by definition $\overline{h}=h$ where $h$ is introduced in the proof of Theorem \ref{the1}. Expressing $\hat{g}^{-1}=\overline{h}\circ g^{-1}$ and $\overline{h}_4(\cdot):=\overline{h}_{n_s:n}(\cdot):\mathcal{Z}\rightarrow\mathcal{Z}_4$ in Equation (\ref{equ:th3_2}) yields
\begin{equation}
\label{equ:th3_3}
\begin{split}
\mathbb{P}\left[\{\overline{h}_4(\rvz)\in A_{\rvz_4}\}|\{\rvu=\rvu_1,\rvy=\rvy_1\}\right]&=\mathbb{P}\left[\{\overline{h}_4(\rvz)\in A_{\rvz_4}\}|\{\rvu=\rvu_2,\rvy=\rvy_2\}\right] \\ &\Longleftrightarrow \\\mathbb{P}\left[\{\rvz\in \overline{h}_4^{-1}(A_{\rvz_4})\}|\{\rvu=\rvu_1,\rvy=\rvy_1\}\right]&=\mathbb{P}\left[\{\rvz\in \overline{h}_4^{-1}(A_{\rvz_4})\}|\{\rvu=\rvu_2,\rvy=\rvy_2\}\right]\\ &\Longleftrightarrow \\ \int_{\rvz\in\overline{h}_4^{-1}(A_{\rvz_4})}p_{\rvz|\rvu,\rvy}(\rvz|\rvu_1,\rvy_1)d\rvz&=\int_{\rvz\in \overline{h}_4^{-1}(A_{\rvz_4})}p_{\rvz|\rvu,\rvy}(\rvz|\rvu_2,\rvy_2)d\rvz,
\end{split}
\end{equation}
where $\overline{h}_4^{-1}(A_{\rvz_4})=\{\rvz\in \mathcal{Z}:\overline{h}_4(\rvz)\in A_{\rvz_4}\}$ is the pre-image of $A_{\rvz_4}$, i.e., those latent variables containing $\rvz_4$ in $A_{\rvz_4}$ after the indeterminacy transformation $h$.

Based on the proposed generation process in Section 2.1, we rewrite Equation (\ref{equ:th3_3}) as follows:
\begin{equation}
\label{equ:th3_4}
\begin{split}
    &\forall A_{\rvz_4} \subseteq \mathcal{Z}_4, \\ &\int_{[\rvz_1^{\top}, \rvz_2^{\top}, \rvz_3^{\top}, \rvz_4^{\top}]^{\top} \in \overline{h}_{4}^{-1}(A_{\rvz_4})} p_{\rvz_4}(\rvz_4)(p_{\rvz_1,\rvz_2,\rvz_3|\rvu,\rvy}(\rvz_1,\rvz_2,\rvz_3|\rvu_1,\rvy_1)\\&-p_{\rvz_1,\rvz_2,\rvz_3|\rvu,\rvy}(\rvz_1,\rvz_2,\rvz_3|\rvu_2,\rvy_2))d\rvz_1d\rvz_2d\rvz_3d\rvz_4=0
\end{split}
\end{equation}

\textbf{Step 2.}In order to show the block-identifiability of $\rvz_4$, we would like to prove that $\rvz_c:=\overline{h}([\rvz_1^{\top},\rvz_2^{\top},\rvz_3^{\top},\rvz_4^{\top}]^{\top})$ does not depend on $\rvz_{1:n_s}$. To this end, we first develop one equivalent statement (i.e., State 3 below) and prove it in a later step instead. By doing so, we are able to leverage the full-support density function assumption to avert technical issues.
\begin{itemize}
    \item Statement 1: $\overline{h}_4([\rvz_1^{\top},\rvz_2^{\top},\rvz_3^{\top},\rvz_4^{\top}]^{\top})$ does not depend on $\rvz_{1:n_s}$
    \item Statement 2: $\forall \rvz_4\in \mathcal{Z}_4$, it follows that $\overline{h}_4^{-1}=B_{\rvz_4}\times\mathcal{Z}_1\times\mathcal{Z}_2\times\mathcal{Z}_3$ where $B_{\rvz_4} \neq \emptyset$ and $B_{\rvz_4}\subseteq\mathcal{Z}_4$. 
    \item Statement 3: $\forall \rvz_4 \in \mathcal{Z}_4, r\in \mathbb{R}^{+}$, it follows that $\overline{h}_4^{-1}(\mathcal{B}_r(\rvz_4))=B_{\rvz_4}^{+}\times\mathcal{Z}_1\times\mathcal{Z}_2\times\mathcal{Z}_3$ where $\mathcal{B}_r(\rvz_4):=\{\rvz_4'\in\mathcal{Z}_4:||\rvz_4'-\rvz_4||^2<r\}, B_{\rvz_4}^+\neq \emptyset$, and $B_{\rvz_4}^+ \subseteq \mathcal{Z}_4$.
\end{itemize}
Statement 2 is a mathematical formulation of Statement 1. Statement 3 generalizes singletons $\rvz_4$ in Statement 2 to open, non-empty balls $\mathcal{B}_r(\rvz_4)$. Later, we use Statement 3 in Step 3 to show the contraction to Equation (\ref{equ:th3_4}).

Leveraging the continuity of $\overline{h}_4(\cdot)$, we can show the equivalence between Statement 2 and Statement 3 as follows. We first show that Statement 2 implies Statement 3. $\forall \rvz_4, r\in\mathbb{R}^+, \overline{h}_c^{-1}(\mathcal{B}(\rvz_4))=\bigcup_{\rvz'_4\in \mathcal{B}_r(\rvz_4)}h_4^{-1}(\rvz'_4)$. Statement 2 indicates that every participating sets in the union satisfies $h^{-1}_4(\rvz'_4)=B'_{\rvz_4}\times\mathcal{Z}_1\times\mathcal{Z}_2\times\mathcal{Z}_3$, thus the union $\overline{h}_c^{-1}(\mathcal{B}_r(\rvz_4))$ also satisfies this property, which is Statement 3.

Then, we show that Statement 3 implies Statement 2 by contradiction. Suppose that Statement 2 is false, then $\exists \hat{\rvz}_4\in\mathcal{Z}_4$ such that there exist $\hat{\rvz}_4^{B}\in\{\rvz_{n_s:n}:\rvz\in\overline{h}_4^{-1}(\hat{\rvz}_4)\}$ and $\hat{\rvz}_{n_s}^B \in \mathcal{Z}_{n_s}$ resulting in $\overline{h}_4(\hat{\rvz}^B)\neq \hat{\rvz}_4$ where $\hat{\rvz}^B=[(\hat{z}^B_4)^{\top},(\hat{\rvz}_{n_s}^B)^{\top}]^{\top}$. As $\overline{h}_4(\cdot)$ is continuous, there exists $\hat{r}\in\mathbb{R}^+$ such that $\overline{h}_4(\hat{\rvz}^B)\notin \mathcal{B}_{\hat{r}}(\hat{\rvz}_4)$. That is, $\hat{\rvz}^B \notin h_4^{-1}(\mathcal{B}_{\hat{r}}(\hat{\rvz}_4))$. Also, Statement 4 suggests that $h_4^{-1}(\mathcal{B}_{\hat{r}}(\hat{\rvz}_c))=\hat{B}_{\rvz_4}\times \mathcal{Z}_{n_s}$. By definition of $\hat{\rvz}^B$, it is clear that $\hat{\rvz}^B_{n_s:n}\in \hat{B}_{\rvz_4}$. The fact that $\hat{\rvz}^B\notin h^{-1}_4(\mathcal{B}_{\hat{r}}(\hat{\rvz}_4))$ contradicts Statement 3. Therefore, Statement 2 is true under the premise of Statement 3. We have shown that Statement 3 implies Statement 2. In summary, Statement 2 and Statement 3 are equivalent, and therefore proving Statement 3 suffices to show Statement 1.

\textbf{Step 3.} In this step, we prove State 3 by contradiction. Intuitively, we show that if $\overline{h}_4(\cdot)$ depended on $\hat{\rvz}_{1},\hat{\rvz}_{2},\hat{\rvz}_{3}$, the preimage $\overline{h}_4^{-1}(\mathcal{B}_r(\rvz_4))$ could be partitioned into two parts (i.e. $B^*_{\rvz}$ and $\overline{h}_4^{-1}(A^*_{\rvz_4})\textbackslash B^*_{\rvz}$ defined below). The dependency between $\overline{h}_4(\cdot)$ and $\hat{\rvz}_4$ is captured by $B_{\rvz}^*$, which would not emerge otherwise. In contrast, $\overline{h}_{4}^{-1}\textbackslash B_{\rvz}^*$ also exists when $\overline{h}_4(\cdot)$ does not depend on $\hat{\rvz}_1,\hat{\rvz}_2,\hat{\rvz}_3$. We evaluate the invariance relation Equation (\ref{equ:th3_4}) and show that the integral over $\overline{h}_4^{-1}(A^*_{\rvz_4}) \textbackslash B^*_{\rvz}$ is always 0, however, the integral over $B^*_{\rvz}$ is necessarily non-zero, which leads to the contraction with Equation (\ref{equ:th3_4}) and thus show the $\overline{h}_4(\cdot)$ cannot depend on $\hat{\rvz}_1,\hat{\rvz}_2,\hat{\rvz}_3,$

First, note that because $\mathcal{B}_r(\rvz_4)$ is open and $\overline{h}_4(\cdot)$ is continuous, the pre-image $\overline{h}_4^{-1}(\mathcal{B}_r(\rvz_4))$ is open. In addition, the continuity of $h(\cdot)$ and the matched observation distributions $\forall \rvu'\in \mathcal{U}, \mathbb{P}[\{\rvx\in A_{\rvx}\}|\{\rvu=\rvu',\rvy=
\rvy'\}]=\mathbb{P}[\{\hat{\rvx}\in A_{\rvx}\}|\{\rvu=\rvu',\rvy=
\rvy'\}]$ lead to $h(\cdot)$ being bijection as shown in \cite{klindt2020towards}, which implies that $\overline{h}_4^{-1}(\mathcal{B}_r(\rvz_4))$ is non-empty. Hence, $\overline{h}_4^{-1}(\mathcal{B}_r(\rvz_4))$ is both non-empty and open. Suppose that $\exists A_{\rvz_4}^*:=\mathcal{B}_{r^*}(\rvz_4^*$ where $\rvz_4^* \in \mathcal{Z}_4, r^*\in\mathbb{R}^+$, such that $B_{\rvz}^*=\{ \rvz \in \mathcal{Z}: \rvz\in \overline{h}_c^{-1}(A_{\rvz_4}^*),\{\rvz_{n_s:n}\}\times\mathcal{Z}_{n_s} \nsubseteq  \overline{h}_4^{-1}(A_{\rvz_4}^*)  \} \neq \emptyset$. Intuitively, $B_{\rvz}^*$ contains the partition of the pre-image $\overline{h}_4^{*}(A_{\rvz}^*)$ that the style part $\rvz_{1:n_s}$ can not take on any value in $\mathcal{Z}_1,\mathcal{Z}_2,\mathcal{Z}_3$. Only certain values of the style part were able to produce specific outputs of indeterminacy $\overline{h}_4(\cdot)$. Clearly, this would suggest that $\overline{h}_4(\cdot)$ depends on $\rvz_4$.
To show contraction with Equation (\ref{equ:th3_4}), we evaluate the LHS of Equation (\ref{equ:th3_4}) with such a $A_{\rvz_4}^*$:
\begin{equation}
\small
\begin{split}
&\int_{[\rvz_1^{\top}, \rvz_2^{\top}, \rvz_3^{\top}, \rvz_4^{\top}]^{\top} \in \overline{h}_{4}^{-1}(A_{\rvz_4}^*)} P_{\rvz_4}(\rvz_4)\left(p_{\rvz_1,\rvz_2,\rvz_3|\rvu,\rvy}(\rvz_1,\rvz_2,\rvz_3|\rvu_1,\rvy_1)-p_{\rvz_1,\rvz_2,\rvz_3|\rvz,\rvy}(\rvz_1,\rvz_2,\rvz_3|\rvu_2,\rvy_2)\right)d\rvz_1d\rvz_2d\rvz_3d\rvz_4\\=&\underbrace{\int_{[\rvz_1^{\top}, \rvz_2^{\top}, \rvz_3^{\top}, \rvz_4^{\top}]^{\top} \in \overline{h}_{4}^{-1}(A_{\rvz_4}^*)\textbackslash B_{\rvz}^*}P_{\rvz_4}(\rvz_4)\left(p_{\rvz_1,\rvz_2,\rvz_3|\rvu,\rvy}(\rvz_1,\rvz_2,\rvz_3|\rvu_1,\rvy_1)-p_{\rvz_1,\rvz_2,\rvz_3|\rvu,\rvy}(\rvz_1,\rvz_2,\rvz_3|\rvu_2,\rvy_2)\right)d\rvz_1d\rvz_2d\rvz_3d\rvz_4}_{T_1} \\+& \underbrace{\int_{[\rvz_1^{\top}, \rvz_2^{\top}, \rvz_3^{\top}, \rvz_4^{\top}]^{\top} \in B_{\rvz}^*}P_{\rvz_4}(\rvz_4)\left(p_{\rvz_1,\rvz_2,\rvz_3|\rvu,\rvy}(\rvz_1,\rvz_2,\rvz_3|\rvu_1,\rvy_1)-p_{\rvz_1,\rvz_2,\rvz_3|\rvu,\rvy}(\rvz_1,\rvz_2,\rvz_3|\rvu_2,\rvy_2)\right)d\rvz_1d\rvz_2d\rvz_3d\rvz_4}_{T_2}
\end{split}
\end{equation}
We first look at the value of $T_1$. When $\overline{h}_{4}^{-1}(A_{\rvz_4}^*)\textbackslash B_{\rvz}^*=\emptyset$, $T_1$ evaluates to 0. Otherwise, by definition, we can rewrite $\overline{h}_{4}^{-1}(A_{\rvz_4}^*)\textbackslash B_{\rvz}^*$ as $C^*_{\rvz_4}\times \mathcal{Z}_1\times \mathcal{Z}_2\times \mathcal{Z}_3$ where $C^*_{\rvz_4}\subset \mathcal{Z}_4$. With this expression, it follows that
\begin{equation}
\small
\begin{split}
&\int_{[\rvz_1^{\top}, \rvz_2^{\top}, \rvz_3^{\top}, \rvz_4^{\top}]^{\top} \in C^*_{C_{\rvz}^*}} P_{\rvz_4}(\rvz_4)\left(p_{\rvz_1,\rvz_2,\rvz_3|\rvu,\rvy}(\rvz_1,\rvz_2,\rvz_3|\rvu_1,\rvy_1)-p_{\rvz_1,\rvz_2,\rvz_3|\rvu,\rvy}(\rvz_1,\rvz_2,\rvz_3|\rvu_2,\rvy_2)\right)d\rvz_1d\rvz_2d\rvz_3d\rvz_4 \\=& \int_{\rvz_4\in C_{\rvz_4}^*}p_{\rvz_4}(\rvz_4) \int_{\rvz_1 \in \mathcal{Z}_1}\int_{\rvz_2 \in \mathcal{Z}_2}\int_{\rvz_3 \in \mathcal{Z}_3}(p_{\rvz_1,\rvz_2,\rvz_3|\rvu,\rvy}(\rvz_1,\rvz_2,\rvz_3|\rvu_1,\rvy_1)-p_{\rvz_1,\rvz_2,\rvz_3|\rvu,\rvy}(\rvz_1,\rvz_2,\rvz_3|\rvu_2,\rvy_2))d\rvz_1d\rvz_2d\rvz_3d\rvz_4\\=& \int_{\rvz_4 \in C_{\rvz_4^*}}p_{\rvz_4}(\rvz_4)(1-1)d_{\rvz_4}=0.
\end{split}
\end{equation}
Therefore, in both cases $T_1$ evaluates to 0 for $A_{\rvz_4}^*$.

Now, we address $T_2$. As discuss above, $\overline{h}_4^{-1}(A_{\rvz_4}^*)$ is open and non-empty. Because of the continuity of $\overline{h}_4(\cdot)$, $\forall \rvz_{B} \in B_{\rvz}^*$, there exists $r(\rvz_B)\in \mathbb{R}^+$ such that $\mathcal{B}_{r(\rvz_B)}(\rvz_B) \subseteq B_{\rvz}^*$. As $p_{\rvz|\rvu,\rvy}>0$ over $(\rvu,\rvz,\rvy)$, we have $\mathbb{P}[\{\rvz\in B_{\rvz}^*\}|\{\rvu=\rvu',\rvy=\rvy'\}]\geq \mathbb{P}[\{\rvz\in B_{r(\rvz_B)}(\rvz_B)\}|\{\rvu=\rvu' ,\rvy=\rvy'\}]>0$ for any $\rvz'\in\mathcal{U}, \rvy\in \mathcal{Y}$. Assumption A7 indicates that $\exists \rvu_1^*,\rvu_2^*$, such that 
\begin{equation}
\begin{split}
    T_2:=\int_{[\rvz_1^{\top}, \rvz_2^{\top}, \rvz_3^{\top}, \rvz_4^{\top}]^{\top} \in B_{\rvz}^*}P_{\rvz_4}(\rvz_4)&\left(p_{\rvz_1,\rvz_2,\rvz_3|\rvu,\rvy}(\rvz_1,\rvz_2,\rvz_3|\rvu_1,\rvy_1)\right.\\&-\left.p_{\rvz_1,\rvz_2,\rvz_3|\rvu,\rvy}(\rvz_1,\rvz_2,\rvz_3|\rvu_2,\rvy_2)\right)d\rvz_1d\rvz_2d\rvz_3d\rvz_4 \neq 0.
\end{split}
\end{equation}
Therefore, for such $A_{\rvz_4}^*$, we would have $T_1+T_2\neq 0$ which leads to contradiction with Equation (\ref{equ:th3_4}). We have proved by contradiction that Statement 3 is true and hence Statement 1 holds, that is, $\overline{h}_4(\cdot)$ does not depend on the changing variables $\rvz_1,\rvz_2,\rvz_3$.

\textbf{Step 4.}With the knowledge that $\overline{h}_4(\cdot)$ does not depend on the changing variables $\rvz_1,\rvz_2,\rvz_3$, we now show that there exists an invertible mapping between the true $\rvz_4$ and the estimated $\rvz_4$.

As $\overline{h}(\cdot)$ is smooth over $\mathcal{Z}$, its Jacobian can written as:
\begin{equation}
\label{equ:jacobian_block}
\begin{gathered}
    \mJ_{\overline{h}}=\begin{bmatrix}
    \begin{array}{c|c|c|c}
        \mJ_{\overline{h}}^{1,1} & \mJ_{\overline{h}}^{1,2}& \mJ_{\overline{h}}^{1,3}& \mJ_{\overline{h}}^{1,4} \\ \cline{1-4}
        \mJ_{\overline{h}}^{2,1} & \mJ_{\overline{h}}^{2,2}& \mJ_{\overline{h}}^{2,3}& \mJ_{\overline{h}}^{2,4} \\ \cline{1-4}
        \mJ_{\overline{h}}^{3,1} & \mJ_{\overline{h}}^{3,2}& \mJ_{\overline{h}}^{3,3}& \mJ_{\overline{h}}^{3,4} \\ \cline{1-4}
        \mJ_{\overline{h}}^{4,1} & \mJ_{\overline{h}}^{4,2}& \mJ_{\overline{h}}^{4,3}& \mJ_{\overline{h}}^{4,4} \\
    \end{array}
    \end{bmatrix},
\end{gathered}
\end{equation}
in which $\mJ_{\overline{h}}^{i,j}$ denotes $\frac{\partial \hat{\rvz}_i}{\partial \hat{\rvz}_j}, i,j \in \{1,2,3,4\}$; and we use notation $\hat{\rvz}_4=\overline{h}(\rvz)_{n_s:n}$, $\hat{\rvz}_1=\overline{h}(\rvz)_{1:n_1}$, $\hat{\rvz}_2=\overline{h}(\rvz)_{n_1+1:n_2}$, $\hat{\rvz}_3=\overline{h}(\rvz)_{n_1+n_2+1:n_3}$. As we have shown that $\hat{\rvz}_4$ does not depend on the changing variables $\rvz_1,\rvz_2,\rvz_3$, if follows $\mJ_{\overline{h}}^{4,1}=0, \mJ_{\overline{h}}^{4,2}=0, \mJ_{\overline{h}}^{4,3}=0$. On the other hand, as $h(\cdot)$ is invertible over $\mathcal{Z}$, $\mJ_{\overline{h}}$ is non-singular. Therefore, $\mJ_{\overline{h}}^{4,4}$ must be non-singular. We note that $\mJ_{\overline{h}}^{4,4}$ is the Jacobian of the function $\overline{h}'_4:=\overline{h}_c(\rvz): \mathcal{Z}_4\rightarrow\mathcal{Z}_4$, which takes only the $\rvz_4$ of the input $\rvz$ into $\overline{h}_4$. According to Corollary 1.1, we also find that $\mJ_{\overline{h}}^{1,4}=0,\mJ_{\overline{h}}^{2,4}=0,\mJ_{\overline{h}}^{3,4}=0$. Together with the invertibility of $\overline{h}$, we can conclude that $\overline{h}'_4$ is invertible. Therefore, there exists an invertible function $\overline{h}_4'$ between the estimated and the true variables such that $\hat{\rvz}_4=\overline{h}_4'(\rvz_4)$, which concludes the proof that $\rvz_4$ is block identifiable via $\hat{g}^{-1}(\cdot)$.


\end{proof}

\section{Implementation Details}
%
\label{app:imple}

\begin{table*}[!ht]
    \centering
    \label{fig:imple}
    \caption{Implementation details of the SIG model in different datasets.}
    \begin{tabular}{l|cccc}
    \toprule
        Datasets & Office-Home & ImageCLEF & PACS & DomainNet \\ \hline
        Encoder & 2-layers MLPs & 2-layers MLPs & 2-layers MLPs & 1-layers MLPs \\ \hline
        Decoder & 2-layers MLPs & 2-layers MLPs & 2-layers MLPs & 2-layers MLPs \\ \hline
        learning rate & 0.008 & 0.01 & 0.01 & 0.001 \\ \hline
        $\alpha$ & 1.00E-05 & 1.00E-05 & 1.00E-05 & 1.00E-05 \\ \hline
        $\beta$ & 0.1 & 0.1 & 0.1 & 0.1 \\ \hline
        $\rvz_1$ dimension & 2 & 4 & 2 & 2 \\ \hline
        $\rvz_2$ dimension & 128 & 128 & 60 & 2048 \\ \hline
        $\rvz_3$ dimension & 128 & 10 & 24 & 32 \\ \hline
        $\rvz_4$ dimension & 10 & 4 & 2 & 2 \\ \hline
        Optimizer & SGD & SGD & SGD & SGD \\ \hline
        Momentum & 0.9 & 0.9 & 0.9 & 0.9 \\ \hline
        batch size & 32 & 32 & 32 & 100 \\ \hline
        backbone & ResNet50 & ResNet50 & ResNet18 & ResNet101-based CAN \\ \hline
    \end{tabular}
\end{table*}

The implementation details of the proposed SIG model are shown in Table 1. For Office-Home and ImageCLEF datasets, we employ the pre-trained ResNet50 as the backbone networks. For the PACS dataset, we use the pre-trained ResNet18 as the backbone network. It is noted that we employ a ResNet101-based cross-attention network (CAN) as the backbone network, which is shown in Figure \ref{fig:can}. In CAN, we inject a cross-attention module into each block of the pre-trained ResNet. Technologically, we use the input feature (e.g. $f_1$ in Figure \ref{fig:can}) and the domain index to calculate the weights $\rvw_c$. Sequentially, we take $\rvw_c\odot f_1$ as the input of the pre-trained ResNet Layers and obtain the output of each block.

\begin{figure}[t]
  \centering
\includegraphics[width=1.0\columnwidth]{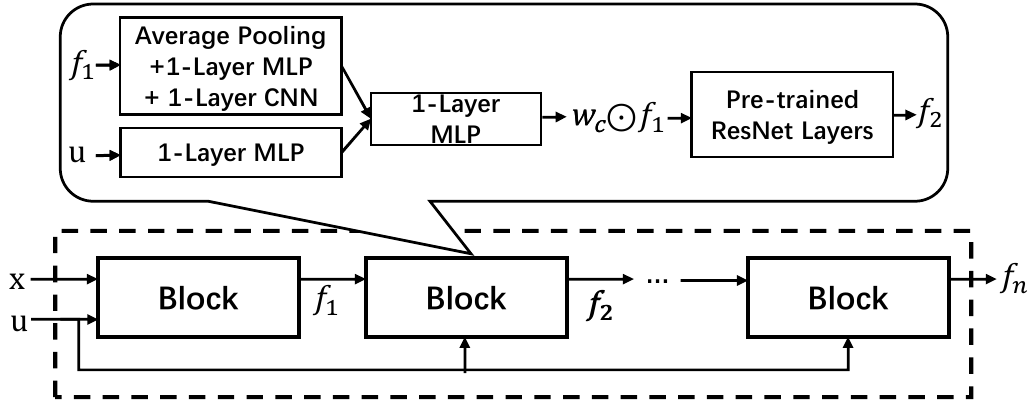}
  \caption{A illustrate framework of the ResNet101-based cross-attention networks (CAN). In each block of the ResNet101, we use the domain information and the inputs of each block to calculate the weights $\rvw_c$ of each dimension of the feature, which dynamically selects the most relevant features.} 
  \label{fig:can} 
\end{figure}

\section{Experiments}
\subsection{Simulation Data Experiments}
\label{app:simu_exp}

We provide more details for the simulation experiments. First, we introduce the details of model architecture for simulation experiments. Second, we further provide the training hyper-parameters.

\subsubsection{Model Architecture.}
For the model architecture of our simulation experiments, the variational auto-encoder (VAE) encoder and decoder are 1-layer MLPs with a hidden dimension of 200, a ReLU activation function, a batch normalization layer, and a dropout layer.

\subsubsection{Training Hyper-parameters.}
We use an SGD optimizer with a momentum of 0.9 to train VAE models with 50 epochs. We also use a learning rate of 0.0035 with a batch size of 768. For the VAE training, we set the hyper-parameters of the KL loss to 1.

\subsection{Real-world Data Experiments}\label{app:real_exp}

We provide implementation details of real-world data experiments. First, we provide detailed descriptions of Office-Home, ImageCLEF, PACS, and DomainNet datasets. Second, we show more experiment results, including more baselines, the mean, and the standard deviation of the results.

\subsubsection{Dataset Description}
\textbf{Office-Home} is a benchmark dataset with 4 domains, where each domain contains 65 categories. These four domains are shown as follows: Art contains artistic images in the form of sketches, paintings, ornamentation, etc.; Clipart contains the collection of clipart images; Product contains images of objects without a background and Real-World contains images of objects captured with a regular camera.
\textbf{ImageCLEF} is a standard domain adaptation benchmark dataset for image classification, consisting of three domains: Caltech-256(C), ImageNet
ILSVRC(I), and Pascal VOC2012(P), consisting of 12 classes. 
\textbf{PACS} is a domain adaptation dataset with 9991 images from 4 domains of different
styles: Photo, Artpainting, Cartoon, and Sketch. It is noted that these domains are shared with the same 7 categories.
\textbf{DomainNet} is a challenging domain adaptation benchmark with 0.6 million images of 345 categories of 6 different styles: clipart, infograph, painting, quickdraw, real, and sketch.

\subsubsection{More Experimental Results}
To show the effectiveness of the proposed SIG model, we further consider more compared methods. Experiment results for Office-Home, ImageCLEF, PACS, and DomainNet are shown in Table \ref{tab:office-home}, \ref{tab:imageclef}, \ref{tab:pacs}, and \ref{tab:domainnet}, respectively. Note that We report the mean and the standard deviation of our method over 3 random seeds (i.e. 3,4,5).

\begin{table}[!ht]
    \centering
    \caption{Classification results on the Office-home datasets. We employ ResNet50 as the backbone network. Baseline results are taken from (\cite{kong2022partial}). }
    \setlength{\tabcolsep}{2.8mm}{
    \begin{tabular}{l|c c c c|c}
    \toprule
        Models& Art & Clipart & Product & RealWorld & Average \\ \midrule
        \textbf{Source Only \cite{he2016deep}} & 64.5 (0.68) & 52.3 (0.63) & 77.6 (0.23) & 80.7 (0.81) & 68.8 \\ 
        \textbf{DANN \cite{ganin2015unsupervised}} & 64.2 (0.59) & 58.0 (1.55) & 76.4 (0.47) & 78.8 (0.49) & 69.3 \\ 
        \textbf{DANN+BSP \cite{chen2019transferability}} & 66.1 (0.27) & 61.0 (0.39) & 78.1 (0.31) & 79.9 (0.13) & 71.2 \\ 
        \textbf{DAN \cite{long2015learning}} & 68.2 (0.45) & 57.9 (0.65)  & 78.4 (0.05) & 81.9 (0.35) & 71.6 \\ 
        \textbf{MCD \cite{saito2018maximum}} & 67.8 (0.38) & 59.9 (0.55) & 79.2 (0.61) & 80.9 (0.18) & 71.9 \\ 
        \textbf{DCTN \cite{xu2018deep}} & 66.9 (0.60) & 61.8 (0.46) & 79.2 (0.58) & 77.7 (0.59) & 71.4 \\ 
        \textbf{MIAN-$\gamma$ \cite{park2021information}} & 69.8 (0.35) & \textbf{64.2} (0.68) & 80.8 (0.37) & 81.4 (0.24) & 74.1 \\ 
        \textbf{iMSDA \cite{kong2022partial}} & 75.4 (0.86) & 61.4 (0.73) & 83.5 (0.22) & 84.4 (0.38) & 76.1 \\ \midrule
        \textbf{SIG} & \textbf{76.4} (0.37) & 63.9 (0.34) & \textbf{85.4} (0.36) & \textbf{85.8} (0.22) & \textbf{77.8} \\ \bottomrule
    \end{tabular}}
    \label{tab:office-home}
\end{table}

\begin{table}[!ht]
    \centering
    \caption{Classification results on the ImageCLEF datasets. We employ ResNet50 as the backbone network. Baseline results are taken from (\cite{ren2022multi}).}
    \label{tab:imageclef}
    \begin{tabular}{l|ccc|c}
    \toprule
        \textbf{Mode} & I,C$\rightarrow$P & I,P$\rightarrow$C & P,C$\rightarrow$I & Average \\ \midrule
        \textbf{Source Only \cite{he2016deep}} & 77.2 & 92.3 & 88.1 & 85.8 \\ 
        \textbf{DAN \cite{long2015learning}} & 77.6 & 93.3 & 92.2 & 87.7 \\ 
        \textbf{ADDA \cite{tzeng2017adversarial}} & 76.5 & 94.0 & 93.2 & 87.0 \\ 
        \textbf{DANN \cite{ganin2015unsupervised}} & 77.9 & 93.7 & 91.8 & 87.8 \\ 
        \textbf{D-CORAL \cite{sun2016return}} & 77.1 & 93.6 & 91.7 & 87.5 \\ 
        \textbf{DSBN \cite{chang2019domain}} & 77.7 (0.2) & 94.1 (0.3) & 91.9 (0.1) & 87.9 \\ 
        \textbf{DSAN \cite{zhu2020deep}} & 77.6 (0.2) & 95.1 (0.1) & 91.4 (0.6) & 88.1 \\ 
        \textbf{MFSAN \cite{zhu2019aligning}} & 79.1 & 95.4 & 93.6 & 89.4 \\ 
        \textbf{PTMDA \cite{ren2022multi}} & 79.1 (0.2) & 97.3 (0.3) & 94.1 (0.3) & 90.1 \\ \midrule
        \textbf{SIG} & \textbf{79.3} (0.57)& \textbf{97.3} (0.34) & \textbf{94.3} (0.07) & \textbf{90.3} \\ \bottomrule
    \end{tabular}
\end{table}

\begin{table}[!ht]
    \centering
    \caption{Classification results on the PACS datasets. We employ ResNet18 as the backbone network. Baseline results are taken from (\cite{kong2022partial}). }
    \label{tab:pacs}
    \setlength{\tabcolsep}{2.8mm}{
    \begin{tabular}{l|cccc|c}
    \toprule
        \textbf{Model} & A & C & P & S & Average \\ \midrule
        \textbf{Source Only \cite{he2016deep}} & 74.9 (0.88) & 72.1 & 94.5 & 64.7 (1.53) & 76.7 \\ 
        \textbf{DANN \cite{ganin2015unsupervised}} & 81.9 (1.13) & 77.5 (1.26) & 91.8 (1.21) & 74.6 (1.03) & 81.5 \\ 
        \textbf{MDAN \cite{zhao2018adversarial}} & 79.1 (0.36) & 76.0 (0.73) & 91.4 (0.85) & 72.0 (0.80) & 79.6 \\ 
        \textbf{WBN \cite{mancini2018boosting}} & 89.9 (0.28) & 89.7 (0.56) & 97.4 (0.84) & 58.0 (1.51) & 83.8 \\ 
        \textbf{MCD \cite{saito2018maximum}} & 88.7 (1.01) & 88.9 (1.53) & 96.4 (0.42) & 73.9 (3.94) & 87 \\ 
        \textbf{M3SDA \cite{peng2019moment}} & 89.3 (0.42) & 89.9 (1.00) & 97.3 (0.31) & 76.7 (2.86) & 88.3 \\ 
        \textbf{CMSS \cite{yang2020curriculum}} & 88.6 (0.36) & 90.4 (0.80) & 96.9 (0.27) & 82.0 (0.59) & 89.5 \\ 
        \textbf{LtC-MSDA \cite{wang2020learning}} & 90.1 & 90.4 & 97.2 & 81.5 & 89.8 \\ 
        \textbf{T-SVDNet \cite{li2021t}} & 90.4 & 90.6 & 98.5 & 85.4 & 91.2 \\ 
        \textbf{iMSDA \cite{kong2022partial}} & 93.7 (0.32) & 92.4 (0.23) & 98.4 (0.07) & 89.2 (0.73) & 93.4 \\ \midrule
        \textbf{SIG} & \textbf{94.0} (0.07) & \textbf{93.6} (0.49) & \textbf{98.6} (0.06) & \textbf{89.5} (0.71) & \textbf{93.9} \\ \bottomrule
    \end{tabular}}
\end{table}

\begin{table}[!ht]
    \centering
    \small
    \caption{Classification results on the DomainNet datasets. We employ ResNet101 as the backbone network. Baseline results are taken from (\cite{li2021dynamic} and \cite{wang2022self}).}\label{tab:domainnet}
    \setlength{\tabcolsep}{1.4mm}{
    \begin{tabular}{l|cccccc|c}
    \toprule
        \textbf{Model} & Clipart & Infograph & Painting & Quickdraw & Real & Sketch & Average \\ \midrule
        \textbf{Source Only \cite{he2016deep}} & 52.1 (0.51) & 23.4 (0.28) & 47.6 (0.96) & 13.0 (0.72) & 60.7 (0.23) & 46.5 (0.56) & 40.6 \\ 
        \textbf{ADDA \cite{tzeng2017adversarial}} & 47.5 (0.76) & 11.4 (0.67) & 36.7 (0.53) & 14.7 (0.50) & 49.1 (0.82) & 33.5 (0.49) & 32.2 \\ 
        \textbf{MCD \cite{saito2018maximum}} & 54.3 (0.64) & 22.1 (0.70) & 45.7 (0.63) & 7.6 (0.49) & 58.4 (0.65) & 43.5 (0.57) & 38.5 \\ 
        \textbf{DANN \cite{ganin2015unsupervised}} & 60.6 (0.42) & 25.8 (0.43) & 50.4 (0.51) & 7.70.68) & 62.0 (0.66) & 51.7 (0.19) & 43.0 \\ 
        \textbf{DCTN \cite{xu2018deep}} & 48.6 (0.73) & 23.5 (0.59) & 48.8 (0.63) & 7.2 (0.46) & 53.5 (0.56) & 47.3 (0.47) & 38.2 \\ 
        \textbf{M$^3$SDA-$\beta$ \cite{peng2019moment}} & 58.6 (0.53) & 26.0 (0.89) & 52.3 (0.55) & 6.3 (0.58) & 62.7 (0.51) & 49.5 (0.76) & 42.6 \\ 
        \textbf{ML\_MSDA \cite{li2020mutual}} & 61.4 (0.79) & 26.2 (0.41) & 51.9 (0.20) & 19.1 (0.31) & 57.0 (1.04) & 50.3 (0.67) & 44.3 \\ 
        \textbf{meta-MCD \cite{li2020online}} & 62.8 (0.22) & 21.4 (0.07) & 50.5 (0.08) & 15.5 (0.22) & 64.6 (0.16) & 50.4 (0.12) & 44.2 \\ 
        \textbf{LtC-MSDA \cite{wang2020learning}} & 63.1 (0.5) & 28.7 (0.7) & 56.1 (0.5) & 16.3 (0.5) & 66.1 (0.6) & 53.8 (0.6) & 47.4 \\ 
        \textbf{CMSS \cite{yang2020curriculum}} & 64.2 (0.18) & 28.0 (0.20) & 53.6 (0.39) & 16.9 (0.12) & 63.4 (0.21) & 53.8 (0.35) & 46.5 \\ 
        \textbf{DRT+ST \cite{li2021dynamic}} & 71.0 (0.21) & 31.6 (0.44) & 61.0 (0.32) & 12.3 (0.38) & 71.4 (0.23) & 60.7 (0.31) & 51.3 \\ 
        \textbf{SPS \cite{wang2022self}} & 70.8 & 24.6 & 55.2 & 19.4 & 67.5 & 57.6 & 49.2 \\ 
        \textbf{PFDA \cite{fu2021partial}} & 64.5 & 29.2 & 57.6 & 17.2 & 67.2 & 55.1 & 48.5 \\ \midrule
        \textbf{SIG} & \textbf{72.7} (0.42) & \textbf{32.0} (0.71) & 60.9 (0.87) & \textbf{20.5} (0.71) & \textbf{72.4} (0.14)    & 59.5 (0.70) & 53.0 \\ \bottomrule
    \end{tabular}
     }
\end{table}

\section{Sensitive Analysis of Hyper-parameters}
\label{app:sensitive}

We also consider the sensitive analysis of $\alpha$ and $\beta$, which is shown in Figure \ref{fig:sen_alpha} and \ref{fig:sen_beta}. In detail, we consider different values of $\alpha$ ($\{0.1, 0.3, 0.5, 0.7, 0.9, 1.1, 1.3\}$). According to the experiment results, we find that the model performance is stable with $\alpha$. We also try different values of $\beta$ ($\{1e-5, 3e-5, 5e-5, 7e-5, 9e-5, 1e-4, 5e-4, 1e-3\}$), we find that the model performance is stable in the range of $1e-5\sim5e-4$, but it drop slightly when the value of $\beta$ becomes too large, e.g. $1e-3$.

\begin{figure}[!ht]
\centering
\subfigure[Sensitive results of $\alpha$]{
\label{fig:sen_alpha}
\includegraphics[width=0.45\columnwidth]{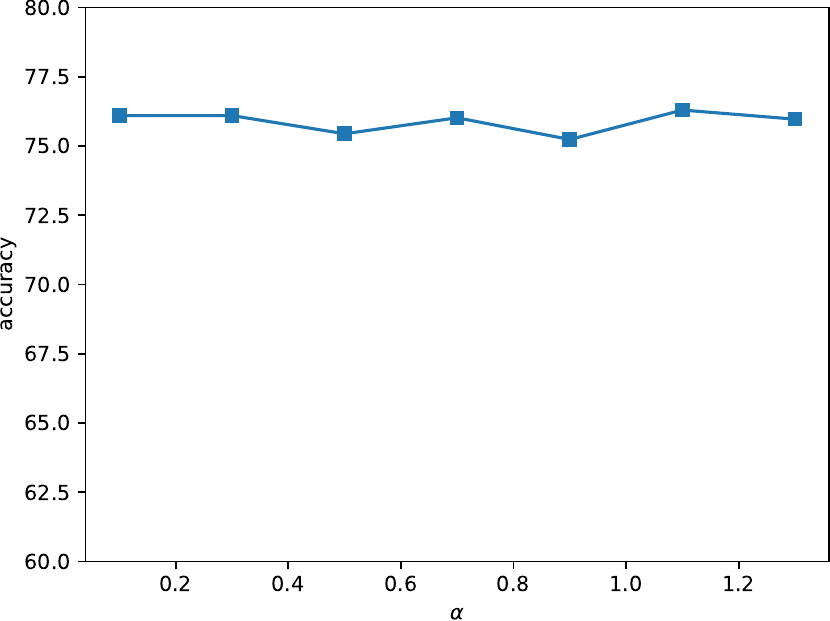}}
\quad
\subfigure[Sensitive results of $\beta$]{
\label{fig:sen_beta}
\includegraphics[width=0.45\columnwidth]{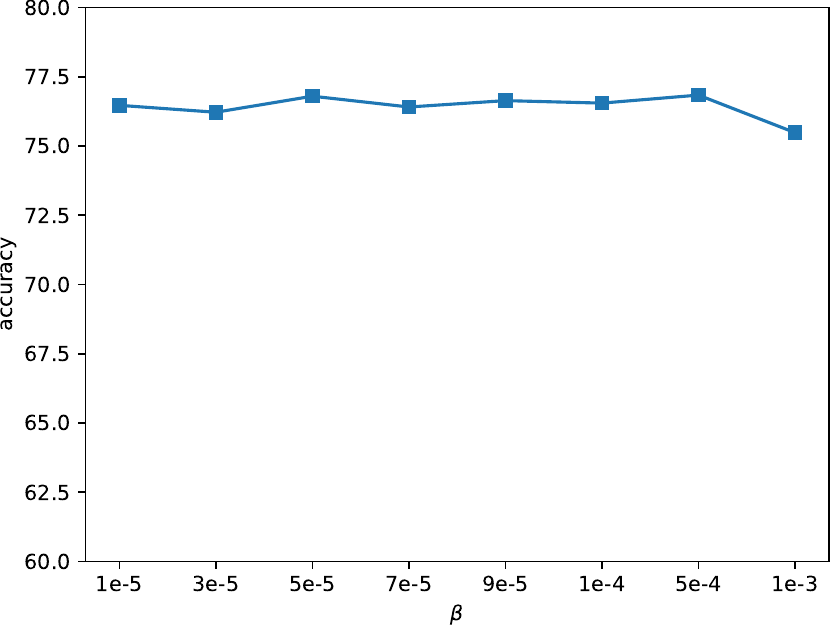}}
\caption{Sensitive analysis of $\alpha$ and $
\beta$ on the $\rightarrow$ Task in Office-Home.}
\label{fig:tsne_sig}
\end{figure}

%

\section{Visualization}
\label{app:visu}
To evaluate the effectiveness of the SIG model qualitatively, we also provide the visualization results in t-SNE as shown in Figure \ref{fig:tsne_sig}. According to the visualization, we can find that our SIG model can generate the features with a more clear class boundary.

\begin{figure}[!ht]
\centering
\subfigure[iMSDA]{
\label{fig:tsne_imsda}
\includegraphics[width=0.45\columnwidth]{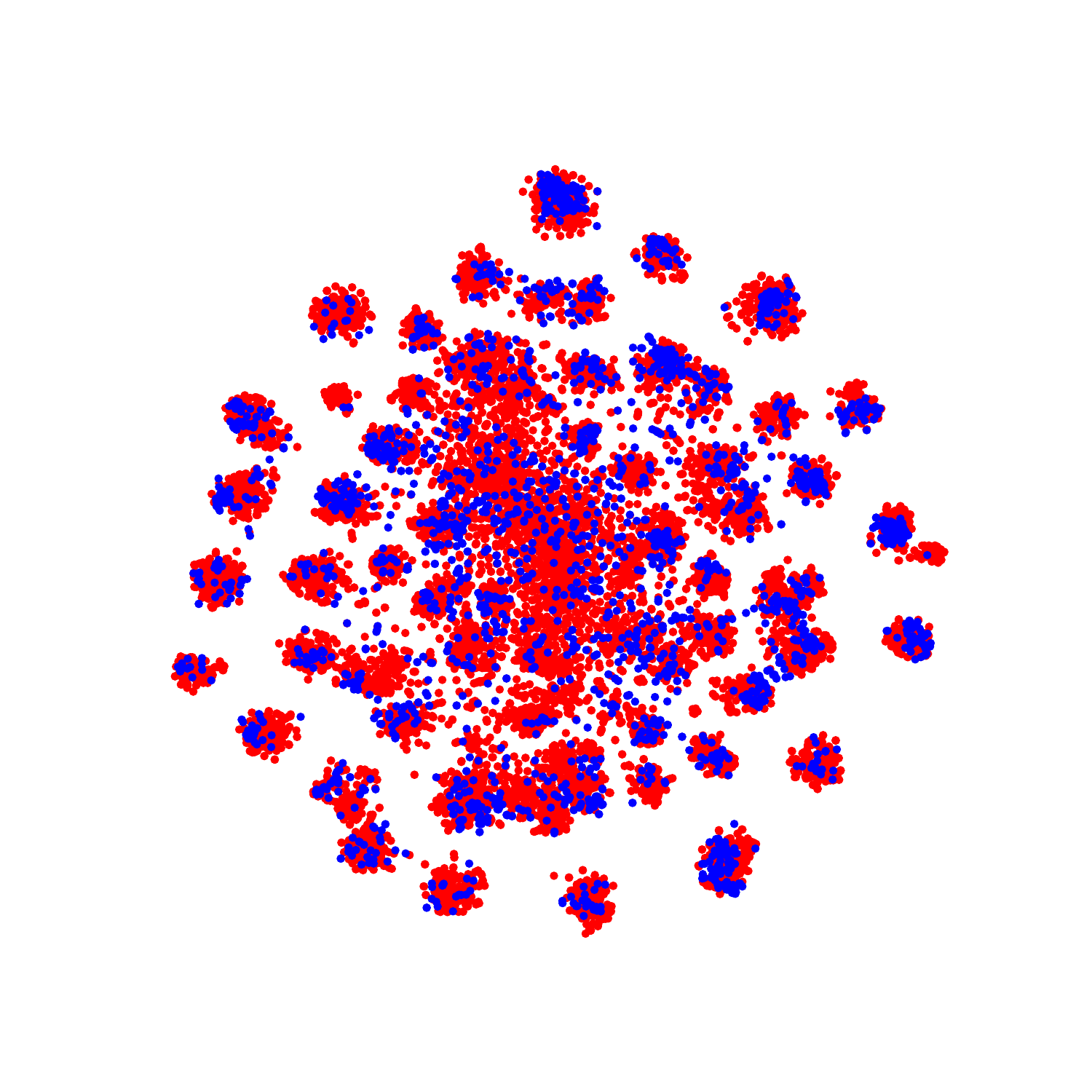}}
\quad
\subfigure[SIG]{
\label{time.2}
\includegraphics[width=0.45\columnwidth]{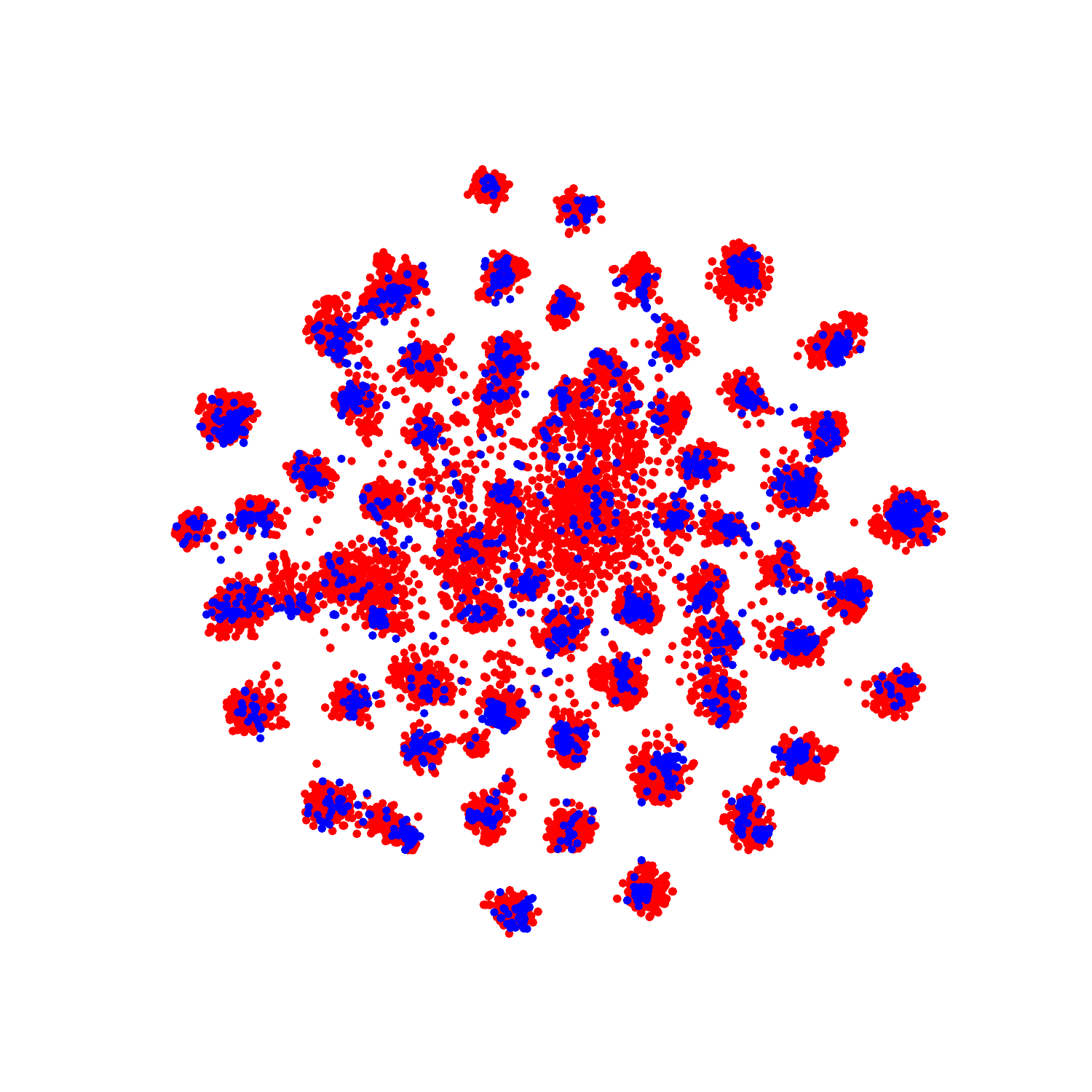}}
\caption{The t-SNE visualizations of learned features on the $\rightarrow$ Art task in Office-Home. Red: source domains, Blue: target domain.}
\label{fig:tsne_sig}
\end{figure}

\section{Related Works}
\label{app:related_works}
\subsection{Domain Adaptation}
Domain adaptation \cite{cai2019learning,zhang2013domain,li2021transferable,kong2022partial,zhang2015multi,zhang2020domain,wen2019bayesian,shui2021aggregating,robey2021model} leverages the knowledge from the labeled source data and unlabeled target data to build a model with ideal generalization. Several researchers solve the challenges of domain adaptation from different perspectives. One of the most conventional directions is to learn the domain-invariant representation \cite{bousmalis2016domain}, which is raised by \cite{ganin2015unsupervised}. Specifically, the key idea of these methods is to extract the domain-invariant representation by aligning the features from different domains. Some researchers \cite{long2017deep} use maximum mean discrepancy (MMD) to realize the domain alignment. Tzeng et.al~\cite{tzeng2014deep} extract the domain-invariant representation by using an adaptation layer and a domain confusion loss. 
Another type of idea assumes that the conditional distributions $P(\bm{z}|y)$ are stable across domains and extract the domain-invariant representation condition on each class \cite{chen2019progressive,chen2019joint,kang2020contrastive}. Specifically, Xie et.al \cite{xie2018learning} minimize the domain discrepancy of inter-class features; Shu et.al \cite{shu2018dirt} consider that the decision boundaries should not cross high-density data regions so they propose the virtual adversarial domain adaptation model. 
Target shift \cite{zhang2013domain,lipton2018detecting,wen2020domain,garg2020unified,roberts2022unsupervised} is also common in domain adaptation, which assumes $p_{\rvy|\rvu}$ varies with different domains. Shui et.al \cite{shui2021aggregating} propose a unified framework to select relevant sources based on the similarity of the conditional distribution. And Remi et.al \cite{tachet2020domain} analyze the generalized label shift and further provide theoretical guarantees on the transfer performance of any classifier.
Recently, several researchers address the domain adaptation problem from the lens of causality \cite{kong2022partial,magliacane2018domain,teshima2020few,chen2021domain,gong2016domain,stojanov2019data}. Zhang et.al~\cite{zhang2013domain} assume that $P(y)$ and $P(\bm{x}|y)$ change independently, and raise the target shift, conditional shift, and generalized target shift assumptions. Cai et.al~\cite{cai2019learning} employ the causal generation process to extract the disentangled semantic representation. Based on the causal analysis, Petar et.al~\cite{stojanov2021domain} find that the domain-invariant should be extracted with the help of domain knowledge, so they propose domain-specific adversarial networks. Despite the outstanding performance of the aforementioned methods, these methods are built on the ad-hoc causal generation process and can not identify the latent variables. In the paper, the proposed \textbf{SIG} method is built on a more general causal generation process and identifies the latent variables with the help of the subspace identification guarantee. 

\subsection{Identification}
To endow more explanation and generalization for the deep generative model, causal representation learning \cite{scholkopf2021toward,kumar2017variational,locatello2019challenging,locatello2019disentangling,zheng2022identifiability,trauble2021disentangled}, which captures the underlying factors and describe the latent generation process, is receiving more and more attention. One of the most classical approaches to learn the causal representation is the independent component analysis (ICA) \cite{hyvarinen2002independent,hyvarinen2013independent,zhang2008minimal,zhang2007kernel,xiemulti,comon1994independent}, in which the generation process is assumed to be a linear mixture function. However, the nonlinear ICA is a challenging task since the latent variables are not identifiable without any extra assumptions on the distribution of latent variables or the generation process \cite{hyvarinen1999nonlinear,zheng2022identifiability,hyvarinen2023identifiability,khemakhem2020ice}. Recently, Aapo et.al \cite{hyvarinen2016unsupervised,hyvarinen2017nonlinear,hyvarinen2019nonlinear,khemakhem2020variational,halva2021disentangling,halva2020hidden} provide the identification theories by introducing auxiliary variables, e.g. domain indexes, time indexes, and class label. These methods usually assume that the latent variables are conditionally independent and follow the exponential families. Recently, Zhang et.al \cite{kong2022partial,xiemulti} break the restriction of exponential families assumption and propose the component-wise identification results for nonlinear ICA with a certain number of auxiliary variables. Following these theoretical results, Yao et.al \cite{yao2022temporally,yao2021learning} recover time-delay latent causal variables and identify their relations from sequential data under the stationary environment and different distribution shifts. Xie et.al \cite{xiemulti} employ the nonlinear ICA to reconstruct the joint distribution of images from different domains; and Kong et.al \cite{kong2022partial} use the component-wise identification results to solve the domain adaptation problem. However, existing identification results heavily rely on a sufficient number of domains and the too-strong monotonic transformation of latent variables, which is hard to satisfy in practice. In this paper, we propose the subspace identification results, which only rely on fewer auxiliary variables compared with component-wise identification and do not rely on any monotonic transformation assumptions.





\end{document}